\newcommand{\tmop}[1]{\ensuremath{\operatorname{#1}}}
\def \lg       {\langle}
\def \rg       {\rangle}
\newtheorem{proposition}{Proposition}[section]
\newtheorem{lemma}{Lemma}[section]
\newtheorem{theorem}{Theorem}[section]
\newtheorem{definition}{Definition}[section]
\theoremstyle{remark}
\newtheorem{remark}{Remark}[section]
\newcommand{\vct}[1]{\mathbf{#1}}
\newcommand{\mtx}[1]{\mathbf{#1}}
\newcommand{\set}[1]{\mathcal{#1}}
\def \mDelta  {{\boldsymbol{\Delta}}}
\def \mSigma  {{\boldsymbol{\Sigma}}}
\def \q {\vct{q}}
\def \u {\vct{u}}
\def \v {\vct{v}}
\def \x {\vct{x}}
\def \z {\vct{z}}
\def \vu {\vct{u}}
\def \mA {\mtx{A}}
\def \mB {\mtx{B}}
\def \mC {\mtx{C}}
\def \mD {\mtx{D}}
\def \mI {\mtx{I}}
\def \mM {\mtx{M}}
\def \mP {\mtx{P}}
\def \mU {\mtx{U}}
\def \mV {\mtx{V}}
\def \mW {\mtx{W}}
\def \mX {\mtx{X}}
\def \mY {\mtx{Y}}
\def \mZ {\mtx{Z}}
\def \calC {\set{C}}
\def \calO {\set{O}}
\def \bbU {\mathbb{U}}
\def \bbV {\mathbb{V}}
\DeclareMathOperator*{\minimize}{\operatorname{minimize}}
\DeclareMathOperator*{\maximize}{\operatorname{maximize}}
\DeclareMathOperator*{\argmin}{\operatorname{arg~min}}
\DeclareMathOperator*{\argmax}{\operatorname{arg~max}}
\def \zero     {\mathbf{0}}
\def \lg       {\langle}
\def \rg       {\rangle}
\def \tr       {\tmop{tr}}
\def \dist     {\tmop{dist}}
\def \R {\mathbb{R}}
\begin{document}

\title{Spherical Principal Component Analysis}
\author{Kai Liu\footnote{Equal contribution.}~~\thanks{Department of Computer Science, Colorado School of Mines.} \\
\and   
Qiuwei Li\footnotemark[1]~~\thanks{Department of Electrical Engineering, Colorado School of Mines.}\\
\and
Hua Wang\footnotemark[2]\\
\and
Gongguo Tang\footnotemark[3]
}
\date{}

\maketitle

\begin{abstract} 
 Principal Component Analysis (PCA) is one of
	the most important methods to handle high dimensional
	data. However, most of the studies on PCA aim to minimize the loss after projection, which usually measure the Euclidean distance, though in some fields, angle distance is known to be more important and critical for analysis. In this paper, we propose a method by adding constraints on factors to unify the Euclidean distance and angle distance. However, due to the nonconvexity of the objective and constraints, the optimized solution is not easy to obtain. We propose an alternating linearized minimization method to solve it with provable convergence rate and guarantee. Experiments on synthetic data and real-world datasets have validated the effectiveness of our method and demonstrated its advantages over state-of-art clustering methods. 
\end{abstract}

\section{Introduction}
In many real-world applications such as text categorization and face recognition, the dimensions of data are usually
very high. Dealing with high-dimensional data
is computationally expensive while noise or outliers in the data can
increase dramatically as the dimension increases. Dimension
reduction  is one of
the most important and effective methods to handle high dimensional
data~\cite{roweis2000nonlinear,tenenbaum2000global,belkin2003laplacian}. Among the dimension reduction
methods, Principal Component Analysis (PCA) is one of the
most widely used methods due to its simplicity and effectiveness.

PCA is a statistical procedure that uses an orthogonal transformation to convert a set of correlated variables into a set of  linearly uncorrelated principal directions. Usually the number of principal directions is less than or equal to the number of original variables. This transformation is defined in such a way that the first principal direction has the largest possible variance (that is, accounts for as much of the variability in the data as possible), and each succeeding direction has the highest variance under the constraint that it is orthogonal to the preceding directions. The resulting vectors are an uncorrelated orthogonal basis set.

When data points lie in a low-dimensional manifold and the
manifold is linear or nearly-linear, the low-dimensional
structure of data can be effectively captured by a linear
subspace spanned by the principal PCA directions.

More specifically, let $\mX =
[\x_1~\cdots~ \x_n]\in\R^{m\times n}$ be $n$ data points in $m$-dimensional space while $\mU = [\u_1~ \cdots ~ \u_r]\in\R^{m\times r}$ contains the principal directions
and $\mV = [\v_1~ \dots ~ \v_n]\in\R^{r\times n}$ contains the principal components
(data projects along the principal directions).
Generally speaking, there can be two formulations for PCA:
\begin{itemize}
	\item Covariance-based approach, which computes the covariance
	matrix 
$\mC =\sum_{i=1}^n(\x_i  - \bar{\x})(\x_i - \bar{\x})^\top= \mX\mX^\top.$
	Here
	we assume the data are already centered, i.e., $\bar{\x} = 0$, and
	we drop the factor $\frac{1}{n-1}$ which does not affect $\mU$.
	The principal directions are obtained as:
	\begin{equation}
	\label{eq:nmf_obj_pca1}
	\maximize_{\mU^\top\mU=\mI} \tr(\mU^\top\mX\mX^\top\mU).
	\end{equation}
	\item Matrix low-rank approximation-based approach.
	Let $\mX \approx \mU\mV$, we solve:
	\begin{equation}
	\label{eq:nmf_obj_pca2}
	\minimize_{\mU^\top\mU=\mI}  \|\mX-\mU\mV\|^2_F = \sum_{i,j}[\mX_{ij}-(\mU\mV)_{ij}]^2.
	\end{equation}
\end{itemize}
Taking the derivative w.r.t. $\mV$ and setting it to zero, we have
$\mV = \mX^\top\mU$, and Eq.~(\ref{eq:nmf_obj_pca2}) reduces to Eq.~(\ref{eq:nmf_obj_pca1}). Therefore, the solutions to these two approaches
are identical. In our paper, we mainly focus on the second formulation.

\section{Motivation}
In Eq.~(\ref{eq:nmf_obj_pca2}), the objective function measures the gap between original data $\mX$ and approximation after projection $\mU\mV$, which is based on squared Euclidean distance measurements and treat each feature as equally important. However, in the real world, there are some given datasets which are preprocessed to be normalized and different features may have various significance. Thus distance-based measurement method may yield poor results. On the other side, similarity-based measurement methods such as angle distance have been proved to be more efficient in some applications, including information retrieval~\cite{singhal2001modern}, signal processing~\cite{hou1987fast}, metric learning~\cite{nguyen2010cosine}, etc.. Though one can calculate the similarity  after projection, still this appears to be more or less awkward and inefficient. Thus, deriving some methods which can directly measure angle distance from PCA is vitally important. However, to our best knowledge, it has not been studied yet.
\begin{figure}
	\centering
    \begin{tabular}{cc}
	\includegraphics[width=0.35\linewidth]{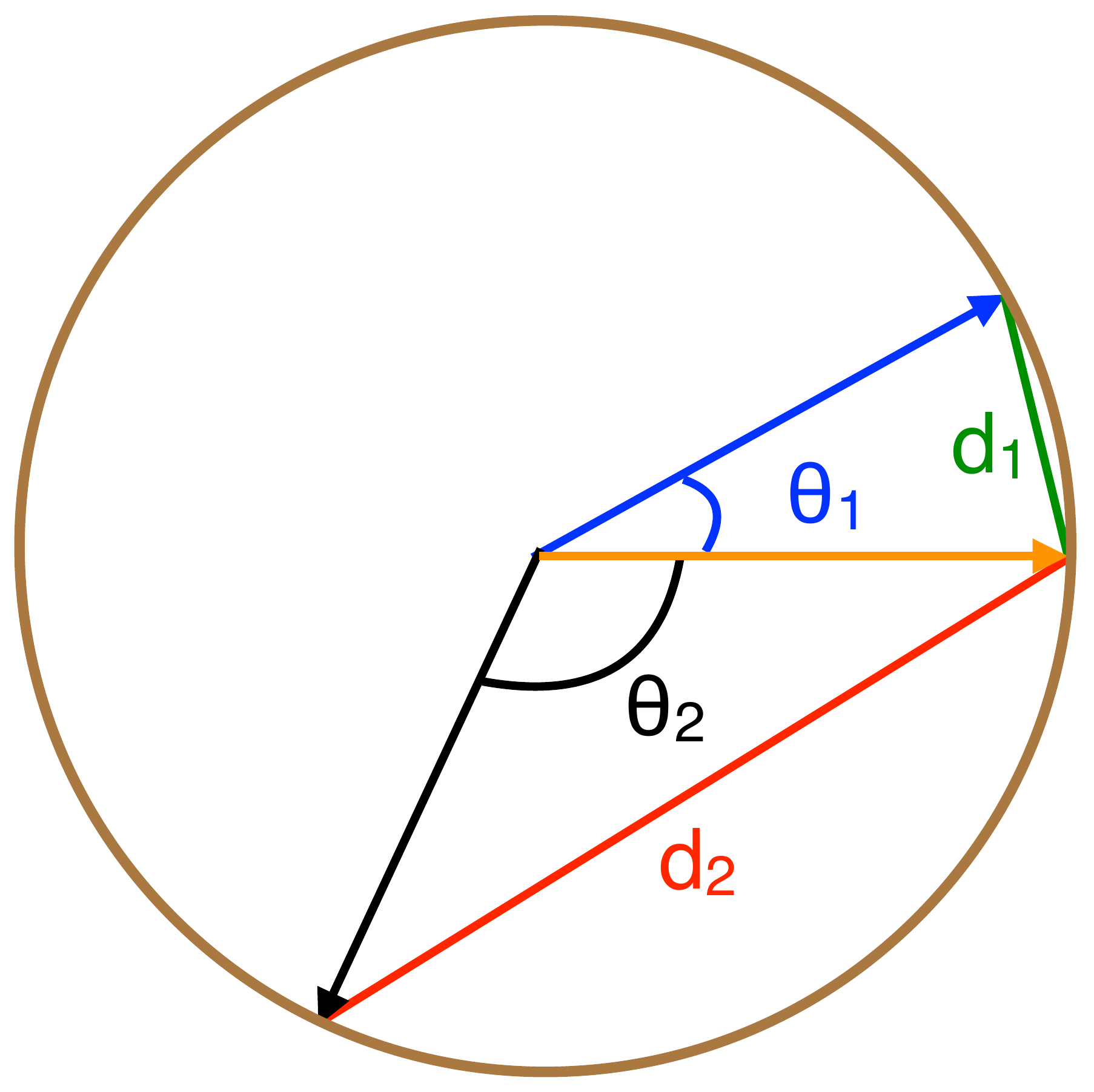} &
	\includegraphics[width=0.38\linewidth]{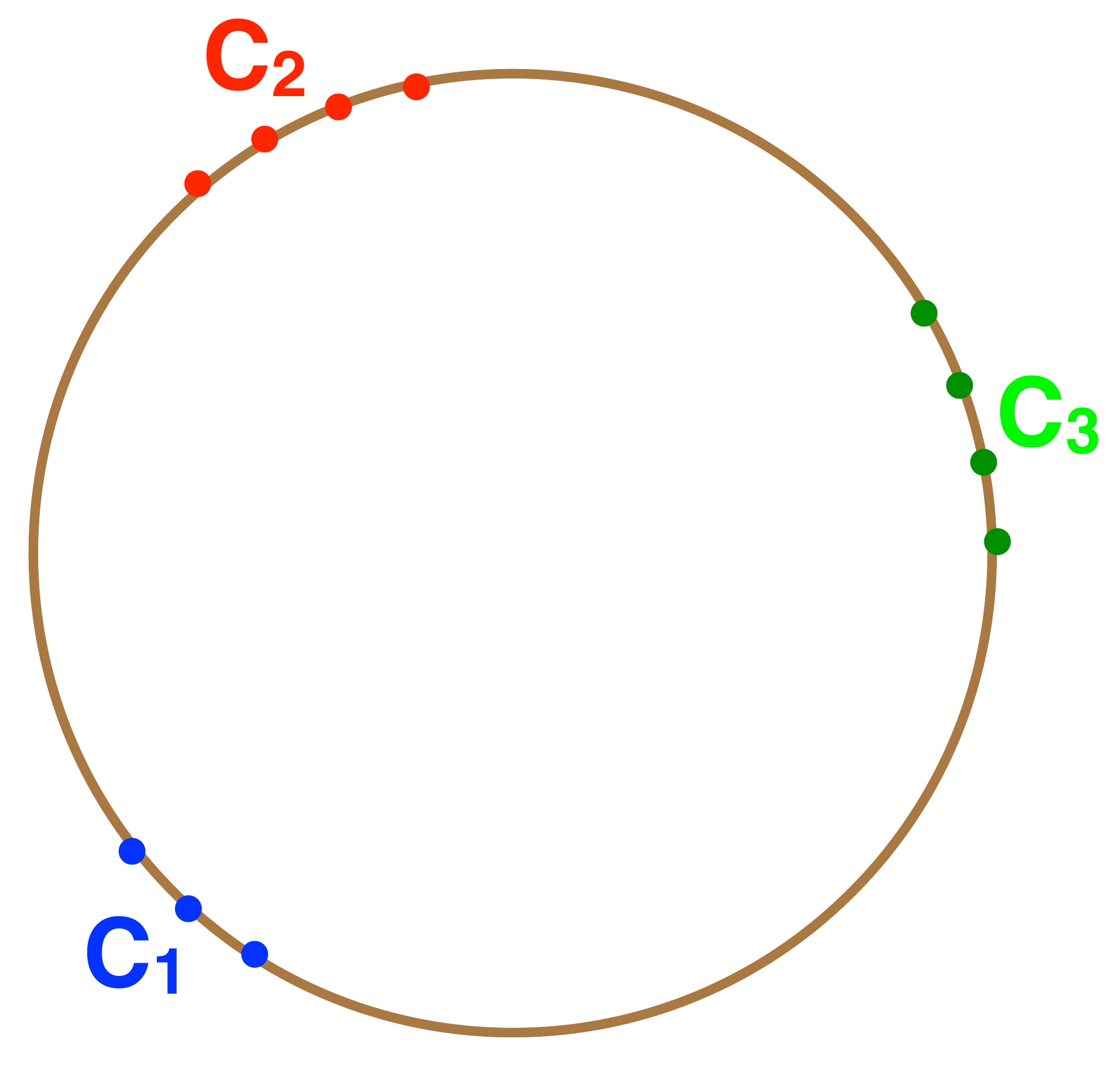}
    \end{tabular}
	\caption{Larger angles ($\theta_2>\theta_1$) in the sphere will have larger Euclidean distance, and vice versa, which unifies the cosine similarity and Euclidean distance simultaneously.}
	\label{fig:theta}
\end{figure}

Motivated by the above observations and a previous work \cite{sun2011angular}, in this paper we propose a spherical-PCA model which can unify the Euclidean distance and angle distance. By noticing that larger angle in the sphere in Fig.~\ref{fig:theta} also has larger Euclidean distance, we can add the normalization constraint to the component matrix, where the norm of each column in $\mV$ is 1 to guarantee the spherical distribution of components:
\begin{equation}
\label{eq:nmf_obj_pca3}
\begin{aligned}
\minimize_{\mU\in\R^{m\times r},\mV\in\R^{r\times n}} & \|\mX-\mU\mV\|^2_F = \sum_{i,j}[\mX_{ij}-(\mU\mV)_{ij}]^2 
\quad s.t. \quad \mU \in \bbU,\ \mV\in \bbV
\end{aligned}
\end{equation}
where we define:
\begin{equation}
\begin{aligned}
	\bbU&:=\{\mU: \mU^\top\mU = \mI\}, 
\\
	\bbV&:=\{\mV: \|\v_j\|=1 \ \forall j\},
	\label{eq:setFG 2}
\end{aligned}
\end{equation}
where $\|\cdot\|$ denotes $\ell_2$ norm for vectors  and denotes the spectral norm for matrices. Suppose the component is spherically distributed, then the Euclidean distance between $\v_i$ and $\v_j$ is:
\begin{equation}
\label{eq:nmf_obj_pca4}
\begin{aligned}
\|\v_i-\v_j\|^2 &= \|\v_i\|^2+\|\v_j\|^2-2\langle \v_i,\v_j\rangle\\
& = \|\v_i\|^2+\|\v_j\|^2-2\frac{\langle \v_i,\v_j\rangle}{\|\v_i\|\|\v_j\|} \\
&= 2-2 \cos(\theta),\ \theta \in [0,\pi]
\end{aligned}
\end{equation}
which is equivalent to angle distance that bigger angle $\theta$ will result in larger Euclidean distance, and vice versa.

\begin{remark}
	In traditional PCA, without the normalization constraint on each column of $\v$, the optimized solution to Eq.~(\ref{eq:nmf_obj_pca2}) can barely satisfy the spherical distribution. Since $r$ is usually less than $m$, PCA will lose some component more or less, thus $\x_i \neq \mU\v_i$ and usually $\|\x_i\| \neq \|\mU\v_i\|$ (they may be equal, but it barely happens) . We have $\|\x_i\|^2=1$ for normalized data and if $\|\v_i\|^2=1$ then $\|\mU\v_i\|^2=\tr (\v_i^\top\mU^\top\mU\v_i)=\tr (\v_i^\top\v_i)=\|\v_i\|^2=1$, which leads a contradiction, thus the constraint on $\mV$ is necessary to guarantee our motivation.
\end{remark}

\section{Formulation And Algorithm}
\subsection{Objective Function with Proximal Term}
We first denote:
\begin{equation}
\label{eq:h}
h(\mU,\mV) = \|\mX-\mU\mV\|_F^2=\sum_{j=1}^n \|\x_j-\mU\v_j\|^2 \quad s.t. \quad \mU \in \bbU, \mV\in \bbV
\end{equation}
By noting the nonconvexity of Eq.~(\ref{eq:nmf_obj_pca3}), where no closed solution exists, we propose an alternating minimization method to get the optimized solution as: given $k$th iterate of $\mV$ varaible $\mV(k)=[\v_1(k)~\cdots~\v_n(k)]$,
\begin{equation}
\begin{aligned}
	\label{eq:alm}
	\mU({k+1}) &= \argmin_{\mU\in\bbU} \|\mX-\mU\mV(k)\|_F^2;\\
	\v_j(k+1) &= \argmin_{\|\v\|=1}\|\x_j-\mU({k+1})\v\|_2^2, \ \forall j
\end{aligned}
\end{equation}

Note that when the constraints  $\mU \in \bbU, \mV\in \bbV
$, the problem \eqref{eq:h} is known as the nonconvex matrix factorization problems, which have been well-studied\cite{
nonsym:li2016,distr:zhu2018global}. This work focus on develop efficient and provable algorithm to deal with \eqref{eq:h} with  the constraints $\mU \in \bbU, \mV\in \bbV$. 
Note that the proximal algorithm recently
has been successfully applied to a wide variety of situations: convex optimization, nonmonotone operators ~\cite{combettes2004proximal,kaplan1998proximal} with various applications to nonconvex programming.
It was first introduced by Rockafellar~\cite{rockafellar1976augmented} as an approximation regularization method in convex optimization and in the study of variational inequalities associated to maximal monotone operators.

Considering the fact that the objective function in Eq.~(\ref{eq:nmf_obj_pca3}) is nonconvex w.r.t. $\mU$ and $\mV$, and the constraint on $\mU$ and $\mV$ are also nonconvex, we consider adding proximal term and optimize the solution as:
with the alternating linearized minimization solutions becomes: 
\begin{equation}
	\label{eq:palm}
\begin{aligned}
	\mU({k+1}) &= \argmin_{\mU\in\bbU} \langle \mU-\mU(k), \nabla_\mU h(\mU(k),\mV(k)) \rangle +\frac{\mu}{2}\|\mU-\mU(k)\|^2_F;\\
	\v_j(k+1) &= \argmin_{\|\v\|=1} \langle \v-\v_j(k), \nabla_{\v_j} h(\mU(k+1),\mV(k)) \rangle+\frac{\lambda}{2}\|\v-\v_j(k)\|^2, \ \forall j
\end{aligned}
\end{equation}
\begin{remark}
	We add the proximal term to make the new updating solution will not  be too far from the previous step to avoid drastic changes. One can see that when the proximal term regularization parameters $\mu,\lambda$ are sufficiently large, they will dominate the objective function. Moreover, we can take the linearized minimization as to minimize the objective with Taylor expansion by making use of first order (linear) information. 
\end{remark}

\subsection{Proposed Algorithm}
Given the alternating minimization objective in Eq.~(\ref{eq:palm}), now we turn to provide detailed (closed) updating algorithm.

 We first derive the solution for $\mU$ and before that we give a useful lemma that is similar to \cite[Theorem 1]{wang2013multi} and \cite[Theorem 1]{wang2011fast}:
 \begin{lemma}
 	$\maximize_{\mU^\top\mU=\mI} \tr (\mU^\top\mM)$ is given by $\mU=\mA\mB^\top$, where $[\mA,\mSigma,\mB] =\tmop{svd}(\mM)$.
 \end{lemma}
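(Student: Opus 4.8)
The plan is to exhibit $\mU = \mA\mB^\top$ as feasible, recast the objective as a weighted sum of diagonal entries of an orthogonally conjugated copy of $\mU$, and then bound those entries. First I would check feasibility: with the (thin) SVD $\mM = \mA\mSigma\mB^\top$, where $\mA$ has orthonormal columns, $\mSigma$ is diagonal with nonnegative entries $\sigma_i := \mSigma_{ii}$, and $\mB$ is orthogonal, the candidate satisfies $\mU^\top\mU = \mB\mA^\top\mA\mB^\top = \mB\mB^\top = \mI$, so $\mU = \mA\mB^\top$ lies in $\bbU$. Next I would substitute the SVD into the objective and use cyclicity of the trace to obtain
\begin{equation*}
\tr(\mU^\top\mM) = \tr(\mU^\top\mA\mSigma\mB^\top) = \tr(\mB^\top\mU^\top\mA\,\mSigma) = \tr(\mZ^\top\mSigma) = \sum_i \sigma_i\, \mZ_{ii},
\end{equation*}
where I define $\mZ := \mA^\top\mU\mB$ and use that $\mSigma$ is diagonal.

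The core step is to bound each diagonal entry by one. I would argue that matrices with orthonormal columns have spectral norm equal to $1$, so $\|\mA^\top\mU\| \le \|\mA\|\,\|\mU\| = 1$ by submultiplicativity, and hence $\|\mZ\| = \|\mA^\top\mU\mB\| \le \|\mA^\top\mU\|\,\|\mB\| \le 1$ because $\mB$ is orthogonal. Consequently, for each $i$, $|\mZ_{ii}| = |\e_i^\top\mZ\e_i| \le \|\mZ\e_i\| \le \|\mZ\| \le 1$. Combining this with $\sigma_i \ge 0$ yields the upper bound
\begin{equation*}
\tr(\mU^\top\mM) = \sum_i \sigma_i\,\mZ_{ii} \le \sum_i \sigma_i |\mZ_{ii}| \le \sum_i \sigma_i = \tr(\mSigma),
\end{equation*}
valid for every feasible $\mU \in \bbU$.

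Finally I would verify attainment: for the proposed maximizer, $\mZ = \mA^\top(\mA\mB^\top)\mB = (\mA^\top\mA)(\mB^\top\mB) = \mI$, so $\mZ_{ii} = 1$ for all $i$ and the objective equals $\sum_i \sigma_i = \tr(\mSigma)$, meeting the bound. Hence $\mU = \mA\mB^\top$ is optimal. I expect the only genuinely delicate point to be the entrywise bound $|\mZ_{ii}| \le 1$; everything else is bookkeeping with the SVD and trace cyclicity. The cleanest route to that bound is the spectral-norm submultiplicativity argument above, but one could alternatively note that $\mZ$ is a contraction obtained by sandwiching $\mU$ between partial isometries, so its singular values, and therefore the magnitudes of its diagonal entries, are at most one. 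Care should be taken with the matrix shapes so that $\mZ$ is the square $r\times r$ matrix on which the diagonal sum is taken.
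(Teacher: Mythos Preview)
Your proof is correct and follows the same route as the paper: substitute the SVD, use trace cyclicity to write the objective as $\sum_i\sigma_i\,\mZ_{ii}$ (the paper works with the transpose $\mP=\mB^\top\mU^\top\mA=\mZ^\top$), bound each diagonal entry by $1$, and verify attainment at $\mU=\mA\mB^\top$. The only notable difference is in justifying $|\mZ_{ii}|\le 1$: the paper asserts $\mP\mP^\top=\mI$, which in fact needs $\mA\mA^\top=\mI$ and hence $\mA$ square; your spectral-norm argument $\|\mZ\|\le\|\mA\|\,\|\mU\|\,\|\mB\|=1$ covers the thin-SVD case for general $m\ge r$, so your treatment of the one delicate step is actually more careful than the paper's.
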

 \begin{proof}
 	On one hand, we have:
 	\begin{equation}
 	\label{eq:maxF}
 	\tr(\mU^\top\mM) = \tr(\mU^\top\mA\mSigma \mB^\top)=\tr(\mP\mSigma),
 	\end{equation}
 	where $\mP=\mB^\top\mU^\top\mA$ is an orthogonal matrix since \[\mP\mP^\top=(\mB^\top\mU^\top\mA)(\mB^\top\mU^\top\mA)^\top=\mI.\]
 Thus every element including the diagonal of $\mP$ is no larger than 1. Then we have:
 	\begin{equation}
 	\label{eq:maxF1}
 	\tr(\mP\mSigma)\le \tr(\mSigma)
 	\end{equation}
 	On the other hand, when $\mU=\mA\mB^\top$, we have 
\[\tr(\mU^\top\mM)=\tr(\mB\mA^\top\mA\mSigma \mB^\top)=\mSigma.\]
 Thus  $\mU=\mA\mB^\top$ is the optimized solution to maximize the objective.
 \end{proof}
Accordingly, we have:
\begin{equation}
\label{eq:F}
\begin{aligned}
\mU({k+1})&=\argmin_{\mU^\top\mU=\mI} \langle \mU-\mU(k), \nabla_{\mU} h(\mU(k),\mV(k)) \rangle+\frac{\mu}{2}\|\mU-\mU(k)\|^2_F\\
&=\argmax_{\mU^\top\mU=\mI} \lg\mU,\mM(k)\rg = \mY\mZ^\top
\end{aligned}
\end{equation}
where $\mM(k):=2(\mX-\mU(k)\mV(k))\mV(k)^\top+\mu \mU(k)$ and $\mY,\mZ$ is obtained from $[\mY,\mSigma,\mZ] =\tmop{svd}(\mM(k))$.

Then we compute $\mV({k+1})$:
\begin{equation}
\label{eq:g}
\begin{aligned}
	\v_j({k+1})&= \argmin_{\|\v\|=1}\langle \v-\v_j(k), \nabla_{\v_j} h(\mU(k+1),\mV(k)) \rangle+\frac{\lambda}{2}\|\v-\v_j(k)\|^2\\
	&=\argmax_{\|\v\|=1} \langle \v,\q \rangle\\
	&=\frac{\q}{\|\q\|},\quad\tmop{for}\ j=1,\ldots,n,
\end{aligned}
\end{equation}
where $\q:=2\mU({k+1})^\top\x_j+(\lambda-2) \v_j(k)$.

\begin{algorithm}[tb]
	\caption{Alternating Linearized Minimization for Problem Eq.~(\getrefnumber{eq:h})}
	\label{alg:alg}
	\begin{algorithmic}
		\STATE {\bfseries Input:} data $\mX\in\R^{m\times n}$, rank of factors $r$, regularization parameters $\lambda,\mu$, number of iterations $K$
		\STATE {\bfseries Initialization:} $\mU(0)\in\R^{m\times r}, \mV(0)\in\R^{r\times n}$
		\STATE {\bf for} {$k=1,\ldots,K$}
		\STATE \qquad Optimize $\mU(k)$  via  Eq.~(\ref{eq:F})
		\STATE \qquad Optimize  $\mV(k)$  via  Eq.~(\ref{eq:g}) for $j=1,\ldots,n$

		\STATE {\bf end}
		\STATE {\bfseries Output:} $\mU(K)$ and $\mV(K)$
	\end{algorithmic}
\end{algorithm}

\section{Convergence Analysis}\label{sec:convergence}
In the following case, we let $\bbU$ and $\bbV$ be as defined in  Eq. \eqref{eq:setFG 2}, and show the convergence of our proposed algorithm in the last section.

To begin with, we first show that $h(\mU,\mV)$ has Lipschitz continuous gradient at $\mU\in\bbU,\mV\in\bbV$, which will be very useful for the following convergence analysis.
\begin{proposition}
$h(\mU,\mV)$ has Lipschitz continuous gradient at $\mU\in\bbU,\mV\in\bbV$, where $\bbU$ and $\bbV$ are defined in  Eq. \eqref{eq:setFG 2}. That is, there exists a constant $L_c$ such that
\begin{equation}
\|\nabla h(\mU,\mV) - \nabla h(\mU',\mV')\|_F \leq L_c\|(\mU,\mV) - (\mU',\mV')\|_F
\label{eq:Lipschitz h}
\end{equation}
for all $\mU,\mU'\in\bbU$ and $\mV,\mV'\in\bbV$. Here $L_c>0$ is referred to as the Lipschitz constant.
\label{prop:Lipschitz h}\end{proposition}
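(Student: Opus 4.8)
The plan is to exploit two facts: that $h$ is a polynomial in the entries of $\mU$ and $\mV$, so its gradient is smooth everywhere, and that the feasible set $\bbU\times\bbV$ is bounded. First I would record the gradients explicitly. A direct computation gives
\begin{equation}
\nabla_\mU h(\mU,\mV) = 2(\mU\mV-\mX)\mV^\top, \qquad \nabla_\mV h(\mU,\mV) = 2\mU^\top(\mU\mV-\mX),
\end{equation}
so that the full gradient $\nabla h=(\nabla_\mU h,\nabla_\mV h)$ is a vector-valued quadratic map in $(\mU,\mV)$. In particular $\nabla h$ is $C^\infty$ on all of $\R^{m\times r}\times\R^{r\times n}$, and its Hessian $\nabla^2 h$ is continuous.

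Next I would use that the constraints force boundedness: for $\mU\in\bbU$ we have $\|\mU\|_F^2=\tr(\mU^\top\mU)=\tr(\mI)=r$, and for $\mV\in\bbV$ we have $\|\mV\|_F^2=\sum_{j}\|\v_j\|^2=n$. Hence $\bbU\times\bbV$ is contained in the compact \emph{convex} set $\calB := \{(\mU,\mV): \|\mU\|_F\le\sqrt{r},\ \|\mV\|_F\le\sqrt{n}\}$. Because $\nabla h$ is $C^1$ and $\calB$ is compact, the operator norm of the Hessian attains a finite maximum on $\calB$, and I set $L_c := \sup_{z\in\calB}\|\nabla^2 h(z)\|$. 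Then, for any $z=(\mU,\mV)$ and $z'=(\mU',\mV')$ in $\bbU\times\bbV\subseteq\calB$, convexity of $\calB$ guarantees that the whole segment $\{(1-t)z+tz': t\in[0,1]\}$ stays inside $\calB$. Applying the fundamental theorem of calculus to $t\mapsto\nabla h((1-t)z+tz')$ and bounding the integrand by $L_c$ yields exactly $\|\nabla h(z)-\nabla h(z')\|_F \le L_c\|z-z'\|_F$, which is \eqref{eq:Lipschitz h}.

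The one point that needs care — and the reason the statement is not completely immediate — is that $\bbU$ and $\bbV$ are nonconvex manifolds, so one cannot run the mean value argument between two feasible points directly: the connecting segment generally leaves $\bbU\times\bbV$. The fix above is to enlarge the domain to the convex ball $\calB$, on which $\nabla h$ is still smooth with a bounded Hessian, and to use only the inclusion $\bbU\times\bbV\subseteq\calB$. Alternatively, one can avoid the Hessian entirely and estimate $\nabla h(z)-\nabla h(z')$ by hand: telescoping each term (e.g. $\mU\mV\mV^\top-\mU'\mV'\mV'^\top=(\mU-\mU')\mV\mV^\top+\mU'(\mV-\mV')\mV^\top+\mU'\mV'(\mV-\mV')^\top$) and bounding the factors via $\|\mU\|_2=1$ on $\bbU$, $\|\mV\|_2\le\sqrt{n}$ on $\bbV$, and submultiplicativity of the spectral and Frobenius norms, gives an explicit constant $L_c$ depending only on $n$ and $\|\mX\|_2$. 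I would expect the explicit route to be the messier of the two, which is why I would lead with the compactness argument and relegate the hand estimate to a remark if an explicit $L_c$ is wanted downstream.
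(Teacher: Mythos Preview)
Your argument is correct and follows the same overall strategy as the paper: bound the Hessian of $h$ on the feasible region and conclude Lipschitz continuity of the gradient. The two write-ups differ in emphasis. The paper computes the Hessian quadratic form explicitly,
\[
[\nabla^2 h(\mU,\mV)](\mDelta,\mDelta)=\|\mDelta_{\mU}\mV+\mU\mDelta_{\mV}\|_F^2+2\langle \mU\mV-\mX,\mDelta_{\mU}\mDelta_{\mV}\rangle,
\]
and reads off the concrete constant $L_c=2(\|\mU\|_F^2+\|\mV\|_F^2+\|\mU\|_F\|\mV\|_F+\|\mX\|_F)=2(r+n+\sqrt{rn}+\|\mX\|_F)$; this explicit value is used downstream when choosing the step sizes $\lambda,\mu>L_c$. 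Your compactness argument yields existence of $L_c$ without the formula, so if the later analysis needs a numerical handle on the step sizes you would want to supplement it with the explicit estimate you sketch at the end. Conversely, you are more careful on one point the paper glosses over: the paper asserts that bounding $\|\nabla^2 h\|$ on $\bbU\times\bbV$ is ``equivalent'' to the Lipschitz claim, but since $\bbU\times\bbV$ is nonconvex the mean-value step really requires the Hessian bound on a convex superset, exactly the ball $\calB$ you introduce. The paper's explicit bound does extend to $\calB$ (it depends only on $\|\mU\|_F$ and $\|\mV\|_F$), so its conclusion is fine, but your framing makes the logic airtight.
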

\begin{proof}[Proof of Proposition~\ref{prop:Lipschitz h}]

It is equivalent to show $\|\nabla^2 h(\mU,\mV)\| \leq L_c$ for all $\mU\in\bbU,\mV\in\bbV$. Standard computations give the Hessian quadrature form  $[\nabla^2 h(\mU,\mV)](\mDelta,\mDelta)$ for any $\mDelta = \begin{bmatrix} \mDelta_{\mU}\\ \mDelta_{\mV}^\top\end{bmatrix}\in\R^{(n+m)\times r}$ (where $\mDelta_{\mU}\in\R^{m\times r}$ and $\mDelta_{\mV}\in\R^{r\times n}$) as
\begin{equation}
\label{eq:hessian}\begin{split}
[\nabla^2h(\mU,\mV)](\mDelta,\mDelta)= \left\|\mDelta_{\mU}\mV+ \mU\mDelta_{\mV}\right\|_F^2  + 2\left\langle \mU\mV -\mX ,\mDelta_{\mU}\mDelta_{\mV} \right\rangle
\end{split}\end{equation}
which gives:
\begin{equation}
\label{eq:hessian_L}
\begin{aligned}
\|\nabla^2 h(\mU,\mV)\|  &= \maximize_{\|\mDelta\|_F = 1}\left| [\nabla^2h(\mU,\mV)](\mDelta,\mDelta) \right|\\
& \leq  \maximize_{\|\mDelta\|_F = 1}\left\|\mDelta_{\mU}\mV+ \mU\mDelta_{\mV}\right\|_F^2 + 2\left| \left\langle \mU\mV -\mX ,\mDelta_{\mU}\mDelta_{\mV} \right\rangle \right|\\
& \leq 2 (\|\mU\|^2_F + \|\mV\|^2_F + \|\mU\|_F\|\mV\|_F + \|\mX\|_F):= L_c,
\end{aligned}
\end{equation}
where the inequality follows from $|\langle \mA,\mB \rangle|\leq \|\mA\|_F\|\mB\|_F$ and  $\|\mC\mD\|_F\leq\|\mC\|_F\|\mD\|_F$. Due to the constraints on $\mU$ and $\mV$, we have $\|\mU\|^2_F = \tr(\mU^\top\mU) = \tr(\mI) = r, \|\mV\|^2_F = \sum_{j=1}^n\|\v_j\|^2=n$.
\end{proof}

To analyse the convergence, we rewrite Eq. \eqref{eq:h} as
\begin{equation}
\minimize_{\mU,\mV} f(\mU,\mV)= h(\mU,\mV) + \delta_{\bbU}(\mU) + \delta_{\bbV}(\mV),
\label{eq:obj no regularizer}
\end{equation}
where 
\[\delta_{\bbU}(\mU) = \begin{cases} 0, & \mU\in\bbU\\ \infty, & \mU\notin \bbU\end{cases}
\] is the indicator function of the set $\bbU$ and therefore nonsmooth, so is $\delta_{\bbV}(\mV)$. 

The following result establishes that the subsequence convergence property of the proposed algorithm, i.e., the sequence generated by Algorithm~\ref{alg:alg} is bounded and any of its limit point is a critical point of Eq. \eqref{eq:obj no regularizer}.
\begin{theorem}[Subsequence convergence]
Let $\{\mW(k)\}_{k\geq 0} =\{(\mU(k),\mV(k))\}_{k\geq 0}$ be the sequence generated by Algorithm~\ref{alg:alg} with constant step size $\lambda,\mu> L_c$. Then the sequence $\{\mW(k)\}_{k\geq 0}$ is bounded and obeys the following properties:
\begin{itemize}
\item[(P1):] Sufficient decrease:
\begin{align}
&f(\mW(k-1)) - f(\mW(k))\geq \frac{\min(\lambda,\mu) - L_c}{2}\| \mW(k)-\mW(k-1)\|_F^2,
\label{eq:sufficient decrease}\end{align}
which implies that
\begin{equation}
\lim_{k\rightarrow \infty}\|\mW(k-1)-\mW(k)\|_F= 0.
\label{eq:diff goes to 0}
\end{equation}
\item[(P2):] The sequence $\{f(\mW(k))\}_{k\geq 0}$ is convergent.
\item[(P3):] For any convergent subsequence $\{\mW(k')\}$, its limit point $\mW^\star$ is a critical point of $f$ and
    \begin{equation}
    \lim_{k'\rightarrow \infty}f(\mW(k')) = \lim_{k\rightarrow \infty}f(\mW(k)) = f(\mW^\star).
    \label{eq:lim f = f lim}
\end{equation}
\end{itemize}
\label{thm:subsequence convergence}
\end{theorem}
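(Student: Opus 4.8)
The plan is to follow the by-now-standard proximal alternating linearized minimization (PALM) template: establish a sufficient-decrease inequality for the block updates, deduce boundedness and monotone convergence of the objective values, and then construct a vanishing subgradient to certify that limit points are critical. Throughout I would use Proposition~\ref{prop:Lipschitz h} to control gradient differences, and I would write the first-order (Fermat) optimality condition of each proximal subproblem in terms of the limiting normal cones $\partial\delta_{\bbU}$ and $\partial\delta_{\bbV}$.

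For (P1) I would apply the descent lemma blockwise. The key observation is that each block subproblem is a convex quadratic on the whole ambient space: with $\mV(k)$ fixed, $\mU\mapsto h(\mU,\mV(k))$ has constant Hessian of norm $2\|\mV(k)\|^2\le 2\|\mV(k)\|_F^2 = 2n\le L_c$, and with $\mU(k+1)\in\bbU$ fixed each $\v\mapsto\|\x_j-\mU(k+1)\v\|^2$ has Hessian $2\mU(k+1)^\top\mU(k+1)=2\mI$ of norm $2\le L_c$. Hence the descent lemma holds globally for each block, so the nonconvexity of $\bbU,\bbV$ is \emph{not} an obstacle here. Combining the descent lemma with the optimality of the proximal step (its value at the new iterate is at most its value $0$ at the old iterate, and the indicator terms cancel since both iterates lie in the constraint set) yields
\[
f(\mU(k+1),\mV(k)) - f(\mU(k),\mV(k)) \le -\tfrac{\mu-L_c}{2}\|\mU(k+1)-\mU(k)\|_F^2,
\]
and the analogous bound with $\lambda$ for the $\mV$-step; adding them and re-indexing gives \eqref{eq:sufficient decrease}. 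Telescoping \eqref{eq:sufficient decrease} and using $f\ge 0$ makes $\sum_k\|\mW(k)-\mW(k-1)\|_F^2$ finite, which forces \eqref{eq:diff goes to 0}. Property (P2) is then immediate: \eqref{eq:sufficient decrease} with $\min(\lambda,\mu)>L_c$ shows $\{f(\mW(k))\}$ is nonincreasing and bounded below by $0$, hence convergent. Boundedness of $\{\mW(k)\}$ is automatic, since every iterate obeys $\|\mU(k)\|_F=\sqrt{r}$ and $\|\mV(k)\|_F=\sqrt{n}$, placing the sequence in a compact subset of $\bbU\times\bbV$.

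The substance of the theorem is (P3). From the optimality conditions of the two proximal updates I would extract, for each $k$, the explicit element
\[
d_{\mU}(k)=\nabla_{\mU}h(\mW(k))-\nabla_{\mU}h(\mU(k-1),\mV(k-1))-\mu\bigl(\mU(k)-\mU(k-1)\bigr),
\]
together with its $\mV$-analogue built from $\nabla_{\mV}h(\mU(k),\mV(k-1))$ and $\lambda$, so that $d(k)=(d_{\mU}(k),d_{\mV}(k))\in\partial f(\mW(k))$. Each difference of gradients is evaluated at two points that differ by at most $\|\mW(k)-\mW(k-1)\|_F$, so Proposition~\ref{prop:Lipschitz h} gives $\|d(k)\|_F\le C\,\|\mW(k)-\mW(k-1)\|_F$ with $C=C(L_c,\lambda,\mu)$, whence $\|d(k)\|_F\to 0$ by \eqref{eq:diff goes to 0}. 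By Bolzano--Weierstrass pick a subsequence $\mW(k')\to\mW^\star$; since $\bbU,\bbV$ are closed, $\mW^\star\in\bbU\times\bbV$, and continuity of $h$ gives $f(\mW(k'))=h(\mW(k'))\to h(\mW^\star)=f(\mW^\star)$. The closedness of the limiting subdifferential, applied to $d(k')\to 0$, $\mW(k')\to\mW^\star$ with convergent function values, then yields $0\in\partial f(\mW^\star)$, i.e.\ $\mW^\star$ is critical, and together with (P2) this gives \eqref{eq:lim f = f lim}. I expect the main obstacle to be precisely this last step: assembling the normal-cone contributions of both blocks into a single element of $\partial f(\mW(k))$, checking that the mismatched gradient arguments are all dominated by one factor of $\|\mW(k)-\mW(k-1)\|_F$, and invoking subdifferential closedness in the form that \emph{requires} convergence of the function values---which is exactly where closedness of $\bbU,\bbV$ and continuity of $h$ are needed.
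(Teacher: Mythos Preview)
Your proposal is correct and follows essentially the same PALM template as the paper: blockwise descent lemma plus proximal optimality for (P1), monotonicity and lower boundedness for (P2), and explicit subgradients $d(k)\in\partial f(\mW(k))$ bounded by $C\|\mW(k)-\mW(k-1)\|_F$ together with closedness of the limiting subdifferential for (P3). The only cosmetic difference is that you justify the block descent lemma via the explicit block Hessians ($2\|\mV\|^2$ and $2\mI$) rather than invoking the global $L_c$ directly, which is in fact a cleaner justification than the paper's; the resulting inequalities and constants are identical.
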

\begin{proof}[Proof of Theorem \ref{thm:subsequence convergence}] 
	Before proving Theorem \ref{thm:subsequence convergence}, we give out some necessary definition.
	\begin{definition}\cite{attouch2013convergence} Let $f:\R^d\rightarrow (-\infty,\infty]$ be a proper and lower semi-continuous function, whose domain is defined as
		\[
		\tmop{dom} f:=\left\{\u\in\R^n:f(\u)<\infty\right\}.
		\]
		
		The (Fr\'{e}chet) subdifferential $\partial f$ of $f$ at $\u$ is defined by
		\[
		\partial f(\u) = \left\{\z\in\R^d:\lim_{\v\rightarrow \u}\inf\frac{f(\v) - f(\u) - \langle \z, \v - \u\rangle}{\|\u - \v\|}\geq 0\right\}
		\]
		for any $\u\in \tmop{dom} h$ and $\partial f(\u) = \emptyset$ if $\u\notin \tmop{dom} f$.
		
		We say $\u$ is a limiting critical  point, or simply a critical point of $f$ if
		\[
		\zero \in \partial f(\u).
		\]
		\label{def:subdifferential}\end{definition}
	We now turn to prove Theorem \ref{thm:subsequence convergence}.
	
\begin{itemize}
	\item {\bf Showing (P1):}
	First note that for all $k$, according to our alternating minimization method, we always have $\delta_{\bbU}(\mU(k)) = \delta_{\bbV}(\mV(k)) = 0$ and thus $f(\mW(k)) = h(\mW(k))$.
	
	Since $h(\mU,\mV)$ has Lipschitz continuous gradient at $\mU\in \bbU,\mV\in\bbV$ with Lipschitz gradient $L_c$ and $\lambda>L_c$, we define $h_{L_c}(\mU,\mU',\mV)$ as proximal regularization of $h(\mU,\mV)$ linearized at $\mU',\mV$:
	\[
	h(\mU',\mV) + \langle \nabla_\mU h(\mU',\mV),\mU - \mU' \rangle + \frac{L_c}{2}\|\mU - \mU'\|_F^2,
	\]
	By the definition of Lipschitz continuous gradient and Taylor expansion, 
	we have
	\begin{equation}
	h(\mU,\mV)\leq h_{L_c}(\mU,\mU',\mV).
	\label{eq:Lip consequence}
\end{equation}
	Also by the definition of proximal map, we get:
	\begin{equation}
\begin{split}
		&\mU(k) 
		 = \argmin_{\mU} \delta_{\bbU}(\mU) +  \frac{\mu}{2}\|\mU - \mU(k-1)\|_F^2
		 +\langle \nabla_\mU h(\mU(k-1),\mV(k-1)),\mU - \mU(k-1) \rangle
	\end{split}\label{eq:opt F}
\end{equation}
	and hence we take $\mU(k)=\mU$, which implies that
	\begin{equation}
	\begin{aligned}
	\delta_{\bbU}(\mU(k)) +  \frac{\mu}{2}\|\mU(k) - \mU(k-1)\|_F^2 
	+\langle \nabla_\mU h(\mU(k-1),\mV(k-1)),\mU(k) - \mU(k-1) \rangle\leq \delta_{\bbU}(\mU(k-1))
	\label{eq: h lambda t}
\end{aligned}
	\end{equation}
	Combining Eq.~(\ref{eq:Lip consequence}) and Eq.~(\ref{eq: h lambda t}), we have:
	\begin{equation}\label{eq:FG_update}
	\begin{aligned}
		&h(\mU(k),\mV(k-1)) + \delta_{\bbU}(\mU(k))
\\
		& \leq h(\mU(k-1),\mV(k-1))+ \langle \nabla_\mU h(\mU(k-1),\mV(k-1)),\mU(k) - \mU(k-1) \rangle + \frac{L_c}{2}\|\mU(k) - \mU(k-1)\|_F^2\\
&\quad+\delta_{\bbU}(\mU(k))\\
		&\leq  h(\mU(k-1),\mV(k-1))+\frac{L_c}{2}\|\mU(k) - \mU(k-1)\|_F^2+\delta_{\bbU}(\mU(k-1))-\frac{\mu}{2}\|\mU(k) - \mU(k-1)\|_F^2\\
		&=h(\mU(k-1),\mV(k-1)) + \delta_{\bbU}(\mU(k-1)) - \frac{\mu - L_c}{2}\|\mU(k) - \mU(k-1)\|_F^2,
	\end{aligned}
	\end{equation}
	Similarly, we have
	\begin{equation}
\begin{aligned}
	\label{eq:G_update}
	h(\mU(k),\mV(k)) - h(\mU(k),\mV(k-1))+\delta_{\bbV}(\mV(k))-\delta_{\bbV}(\mV(k-1)) 
	&\leq -\frac{\lambda - L_c}{2}\|\mV(k) - \mV(k-1)\|_F^2
\end{aligned}
\end{equation}
	which together with the above equation gives Eq. (\ref{eq:sufficient decrease}). Now repeating Eq. (\ref{eq:sufficient decrease}) for all $k$ will give
	\begin{equation}
\label{eq:W_update}
	(\min(\lambda,\mu) - L_c)\sum_{k=1}^\infty\|\mW(k) - \mW(k-1)\|_F^2 \leq f(\mW(0)),
	\end{equation}
	which gives Eq. (\ref{eq:diff goes to 0}).
	\begin{remark}
		In our proposed algorithm, since in every update, our solution is closed while satisfying the constraints, thus in fact $\delta_{\bbU}$ and $\delta_{\bbV}$ are $0$, and $\infty$ is never achieved.
	\end{remark}
\item {\bf Showing (P2):} It follows from Eq. (16) that $\{f(\mW(k))\}_{k\geq 0}$ is a decreasing sequence. Due to the fact that $f$ is lower bounded as $f(\mW(k))\geq 0$ for all $k$, we conclude that $\{f(\mW(k))\}_{k\geq 0}$ is convergent.
	
\item {\bf Showing (P3):} Since $\mU(k')\in \bbU, \mV(k')\in \bbV$ for all $k'$ and both of the sets $\bbU$ and $\bbV$ are closed, we have $\mU^\star\in\bbU,\mV^\star\in\bbV$. Since $h$ is continuous, we have
	\begin{equation}	
\label{eq:stationary_point}\begin{split}
		\lim_{k'\rightarrow \infty}f(\mW(k'))& =  \lim_{k'\rightarrow \infty} h(\mU(k'),\mV(k')) + \delta_{\bbU}(\mU(k')) + \delta_{\bbV}(\mV(k'))
		= f(\mW^\star)
	\end{split}\nonumber
\end{equation}
	which together with the fact that  $\{f(\mW(k))\}_{k\geq 0}$ is convergent gives Eq. (\ref{eq:diff goes to 0}). 
	
	To show $\mW^\star$ is a critical point, we first consider Eq. \eqref{eq:opt F} and the optimality condition yields:
	\begin{equation}\label{eq:FF_point}
	\nabla_\mU h(\mU(k-1),\mV(k-1)) + \mu(\mU(k) - \mU(k-1)) + \partial \delta_\bbU(\mU(k))=0.
	\end{equation}
	Similarly, we have
	\begin{equation}\label{eq:GG_point}
	\nabla_\mV h(\mU(k),\mV(k-1)) + \lambda(\mV(k) - \mV(k-1)) + \partial \delta_\bbV(\mV(k))=0.
	\end{equation}
	Now, define
	\begin{align*}
	&\mA_k:={\nabla_\mU h(\mU(k),\mV_k) + \partial \delta_\bbU(\mU(k))},\\
	&\mB_k:={\nabla_\mV h(\mU(k),\mV(k)) + \partial \delta_\bbV(\mV(k))}.
	\end{align*}
	Thus, we have
	\begin{equation}
		\mA_k\in \partial_\mU f(\mU(k),\mV(k)), \mB_k \in \partial_\mV f(\mU(k),\mV(k)).
	\end{equation}
	It follows from the above that
	\begin{equation}\label{eq:A}
	\begin{aligned}
		\lim_{k\rightarrow \infty}\|\mA_k\|_F
		& \leq \lim_{k\rightarrow \infty} \|\nabla_\mU h(\mU(k),\mV(k)) - \nabla_\mU h(\mU(k-1),\mV(k-1))\|_F  + \mu \|\mU(k) - \mU(k-1)\|_F \\
		& \leq \lim_{k\rightarrow \infty} (L_c + \mu) \|\mW(k) - \mW(k-1)\|_F = 0.
	\end{aligned}
	\end{equation}
	Similarly, we have:
		\begin{equation}\label{eq:B}
	\begin{aligned}
		\lim_{k\rightarrow \infty}\|\mB_k\|_F
		 \leq \lim_{k\rightarrow \infty} (L_c + \lambda) \|\mW(k) - \mW(k-1)\|_F = 0.
	\end{aligned}
	\end{equation}
	Then we have:
	\begin{equation}\label{eq:all}
	\dist(\zero, \partial f(\mW(k)))\leq (2L_c + \mu+ \lambda) \|\mW(k) - \mW(k-1)\|_F
	\end{equation}
	Owing to the closedness properties of $\partial f(\mW(k'))$, we finally obtain 
\[\zero\in \partial f(\mW^\star).\] Thus, $\mW^\star$ is a critical point of $f$.
\end{itemize}
\end{proof}

\begin{theorem}[Sequence convergence] The sequence $\{\mW(k)\}_{k\geq 0}$  generated by  Algorithm~\ref{alg:alg} with a constant step size $\lambda, \mu> L_c$ is global-sequence convergence.
\label{thm:sequence convergence}
\end{theorem}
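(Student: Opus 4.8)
The plan is to invoke the abstract convergence machinery for descent methods on functions satisfying the Kurdyka--\L ojasiewicz (KL) property, following the framework of \cite{attouch2013convergence}. That framework guarantees that any sequence obeying (i) a sufficient-decrease condition, (ii) a relative-error (subgradient) bound, and (iii) a continuity condition at its cluster points converges to a single critical point, provided the objective $f$ is a KL function. The three structural conditions are already in hand from Theorem~\ref{thm:subsequence convergence}: condition (i) is exactly the sufficient-decrease inequality (P1); condition (ii) is the subgradient bound \eqref{eq:all}, namely $\dist(\zero,\partial f(\mW(k)))\leq (2L_c+\mu+\lambda)\|\mW(k)-\mW(k-1)\|_F$; and condition (iii) follows from (P3) together with the boundedness of $\{\mW(k)\}$. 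Thus the only genuinely new task is to certify that $f$ is a KL function.

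First I would establish the KL property of $f$. The smooth part $h(\mU,\mV)=\|\mX-\mU\mV\|_F^2$ is a polynomial in the entries of $\mU$ and $\mV$, hence real-analytic and in particular semialgebraic. The two nonsmooth terms are the indicator functions of the sets $\bbU=\{\mU:\mU^\top\mU=\mI\}$ and $\bbV=\{\mV:\|\v_j\|=1\ \forall j\}$, both of which are defined by finitely many polynomial equalities and are therefore semialgebraic sets; their indicator functions are consequently semialgebraic. Since finite sums of semialgebraic functions are semialgebraic, $f=h+\delta_{\bbU}+\delta_{\bbV}$ is semialgebraic, and every proper lower-semicontinuous semialgebraic function satisfies the KL property at each point of its domain. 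This supplies the missing ingredient.

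Next I would assemble the convergence argument. Because the sequence is bounded, it admits at least one cluster point $\mW^\star$, which by (P3) is a critical point with $f(\mW(k))\to f(\mW^\star)$. Using the uniformized KL property on a neighborhood of the (compact) cluster set, I would combine the KL inequality $\varphi'(f(\mW(k))-f(\mW^\star))\,\dist(\zero,\partial f(\mW(k)))\geq 1$ with the sufficient-decrease bound and the subgradient bound. The standard manipulation---exploiting concavity of the desingularizing function $\varphi$ to bound $\|\mW(k)-\mW(k-1)\|_F$ by a telescoping difference of $\varphi$-values plus a fraction of the previous step length---yields the finite-length estimate $\sum_{k}\|\mW(k)-\mW(k-1)\|_F<\infty$. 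A finite-length sequence is Cauchy, so $\{\mW(k)\}$ converges to $\mW^\star$, establishing global (whole-sequence) convergence.

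The main obstacle I anticipate is the finite-length step. Verifying the KL property is essentially a structural checklist once semialgebraicity is recognized, and conditions (i)--(iii) are already proven; but the passage from the pointwise KL inequality to a summable bound on the increments requires the correct telescoping argument and careful handling of the degenerate case where $f(\mW(k))$ attains the limit value $f(\mW^\star)$ in finitely many steps, in which case the sequence terminates trivially. Getting the constants right in the concavity inequality, and ensuring that the iterates remain within the neighborhood where the uniformized KL inequality is valid, is where the bookkeeping concentrates.
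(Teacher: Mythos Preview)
Your proposal is correct and follows essentially the same route as the paper: establish that $f=h+\delta_{\bbU}+\delta_{\bbV}$ is semialgebraic and hence KL, then combine the uniformized KL inequality on the compact cluster set with the sufficient-decrease bound (P1) and the subgradient estimate \eqref{eq:all} to produce the telescoping/finite-length inequality $\sum_k\|\mW(k)-\mW(k-1)\|_F<\infty$, yielding a Cauchy sequence. The paper carries out the concavity/telescoping step explicitly with the desingularizing function $\varphi(x)=x^{1-\theta}$ rather than citing the abstract framework, but the argument is otherwise identical to what you outline.
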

\begin{proof}[Proof of Theorem \ref{thm:sequence convergence}]
Before proving Theorem \ref{thm:sequence convergence}, we give out another important definition.
\begin{definition}[\bf Kurdyka-Lojasiewicz (KL) property]\cite{bolte2007lojasiewicz}\label{def:KL}
	We say a proper semi-continuous function $h(\vu)$ satisfies Kurdyka-Lojasiewicz (KL) property, if $\overline{\vu}$ is a   critical point of $h(\vu)$, then there exist $\delta>0,~\theta\in[0,1),~C_1>0$ such that
	\[
	\left|h(\vu) - h(\overline{\vu})\right|^{\theta} \leq C_1 \dist(\zero, \partial h(\vu)),~~\forall~\vu\in B(\overline{\vu}, \delta)
	\]
	
\end{definition}
We mention that the above KL property(also known as KL inequality) states the regularity of $h(\u)$ around its critical point $\u$ and the KL inequality trivially holds at non-critical point. There are a very large set of functions satisfying the KL inequality including any semi-algebraic functions \cite{attouch2013convergence}.  Clearly, the objective function $f$ is semi-algebraic as both $h$, $\delta_{\bbU}$ and $\delta_{\bbV}$ are semi-algebraic.

\begin{lemma}[Uniform KL property]\label{lem:KL:f}
	There exist $\delta_0>0,~{\theta_{KL}}\in[0,1),~C_{KL}>0$ such that for all $W$ s.t. $\operatorname{dist}(\mW,\mathbb{C}(\mW(0)))\leq\delta_0$:
	\begin{align}
	\label{eqn:KL:f:delta0}
	\left|f(\mW) - \overline f\right|^{{\theta_{KL}}} \leq C_{KL} \dist(\zero,\partial f(\mW))
	\end{align}	
	with $\overline f$ denoting the limiting function value  defined in (P2) of Theorem \ref{thm:subsequence convergence}.
\end{lemma}
\begin{proof}
	First we
	recognize the union $\bigcup_i B(\mW^\star_i, \delta_i)$ forms an open cover of $\mathbb{C}(\mW(0))$ with $\mW^\star_i$ representing all points in  $\mathbb{C}(\mW(0))$ and  $\delta_i$ to be chosen so that the the following KL property of $f$ at $\mW^\star_i\in \mathbb{C}(\mW(0))$ holds: 
	\begin{align*}
	\left|f(\mW) - \overline f\right|^{\theta_i} \leq C_i \dist(\zero,\partial f(\mW))~~\forall~\mW\in B(\mW^\star_i, \delta_i)
	\end{align*}	
	where we have used all $f(\mW^\star_{i})=\overline f$ by assertion (P3) of Theorem \ref{thm:subsequence convergence}.
	Then  due to the compactness of the set $\mathbb{C}(\mW(0))$, it has a finite subcover   $\bigcup_{i=1}^p B(\mW^\star_{k_i}, \delta_{k_i})$ for some positive integer $p$. Now combining all, we have for all $W\in \bigcup_{i=1}^p B(\mW^\star_{k_i},\delta_{k_i})$,
	\begin{align}\label{eqn:KL:f:individual}
	\left|f(\mW) - \overline f\right|^{{\theta_{KL}}} \leq C_{KL} \dist(\zero,\partial f(\mW))
	\end{align}
	with ${\theta_{KL}}=\max_{i=1}^p\{\theta_{k_i}\}$ and $C_{KL}=\max_{i=1}^p\{C_{k_i}\}$. Finally, since $\bigcup_{i=1}^p B(\mW^\star_{k_i},\delta_{k_i})$ is an open cover of $\mathbb{C}(\mW(0))$, there exists a sufficiently small number $\delta_0$ so that 
	\[
	\{(\mW):\operatorname{dist}(\mW,\mathbb{C}(\mW(0)))\leq\delta_0\}\subset \bigcup_{i=1}^p B(\mW^\star_{i},\delta_{k_i}).
	\]
	Therefore, \cref{eqn:KL:f:individual} holds whenever $\operatorname{dist}(\mW,\mathbb{C}(\mW(0)))\leq\delta_0$. 
\end{proof}
We now turn to prove Theorem \ref{thm:sequence convergence}.

According to Definition~\ref{def:KL}, there exists a sufficiently large $k_0$ satisfying:
\begin{align}
[f(\mW(k)) - f(\mW^\star)]^\theta \leq  C_2 \dist(\zero, \partial f(\mW(k))), \ \ \forall k\geq k_0.
\label{eqn:KL:f}
\end{align}
In the subsequent analysis, we restrict to $k\geq k_0$. 
Construct a concave function  $x^{1-\theta}$ for some $\theta\in[0,1)$ with domain $x>0$. Obviously, by the concavity, we have
\[  x_2^{1-\theta}-x_1^{1-\theta}\geq   (1-\theta) x_2^{-\theta}(x_2-x_1), \forall x_1>0,x_2>0\]
Replacing $x_1$ by $f(\mW_{k+1}) - f(\mW^\star)$ and  $x_2$ by $f(\mW_{k}) - f(\mW^\star)$ and using the sufficient decrease property, we have 
\begin{align*}
& [f(\mW(k)) - f(\mW^\star)]^{1-\theta} - [f(\mW(k+1)) - f(\mW^\star )]^{1-\theta} 
\\
& \geq  (1-\theta)\frac{f(\mW(k)) - f(\mW(k+1)) }{[f(\mW(k)) - f(\mW^\star)]^\theta}\\ 
&\geq  \frac{\lambda(1-\theta)}{2C_2} \frac{\|\mW(k) - \mW(k+1)\|_F^2 }{\dist(\zero,\partial f(\mW(k)))},\\
&\geq \frac{\lambda(1-\theta)}{2C_2 C_3} \frac{\|\mW(k) - \mW(k+1)\|_F^2 }{\|\mW(k) - \mW(k-1)\|_F}\\
&=\kappa (\frac{\|\mW(k) - \mW(k+1)\|_F^2 }{\|\mW(k) - \mW(k-1)\|_F}+\|\mW(k) - \mW(k-1)\|_F) -\kappa \|\mW(k) - \mW(k-1)\|_F\\
&\geq \kappa\left( 2 \|\mW(k) - \mW(k+1)\|_F - \|\mW(k) - \mW(k-1)\|_F\right) 
\end{align*}
And accordingly, we have:
\begin{equation}
\begin{aligned}
2 \|\mW(k) - \mW(k+1)\|_F - \|\mW(k) - \mW(k-1)\|_F
& \leq \beta \left([f(\mW(k)) - f(\mW^\star)]^{1-\theta} - [f(\mW(k+1)) - f(\mW^\star) ]^{1-\theta} \right)
\end{aligned}
\end{equation}
with $C_3:=2L_c + \mu+ \lambda, \kappa:= \frac{\lambda(1-\theta)}{2C_2 C_3}, \beta:=\left(\frac{\lambda(1-\theta)}{2C_2C_3}\right)^{-1}$.

Summing the above inequalities up from some $\widetilde k>k_0$ to infinity yields 
\begin{equation} \label{eq:difference summable}
\begin{aligned}
\sum_{k=\widetilde k}^{\infty}  \|\mW(k) - \mW(k+1)\|_F 
& \leq  \|\mW( \widetilde k) - \mW(\widetilde k-1)\|_F +  \beta  [f(\mW(\widetilde k)) - f(\mW^\star)]^{1-\theta}
\end{aligned}
\end{equation}
implying
\[
\sum_{k=\widetilde k}^{\infty}  \|\mW(k) - \mW(k+1)\|_F  < \infty.
\]
Following some standard arguments one can see that  
\[
\limsup_{t\rightarrow \infty, t_1,t_2\geq t}  \|\mW(t_1) - \mW(t_2)\|_F  = 0
\]
which implies that the sequence $\{  \mW(k) \}$  is Cauchy, and  hence  convergent.  Hence, the limit point set $\calC(\mW(0))$ is  singleton $\mW^\star$.
\end{proof}

\begin{theorem}[Convergence Rate] The convergence rate is at least sub-linear.
	\label{thm:convergence_rate}\end{theorem}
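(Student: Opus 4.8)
The plan is to distill the two properties already established — the sufficient decrease estimate (P1) of Theorem~\ref{thm:subsequence convergence} and the uniform KL inequality of Lemma~\ref{lem:KL:f} — into a single scalar recursion for the function-value gap, and then to read off the rate by a case analysis on the KL exponent $\theta$. Throughout I would write $r_k := f(\mW(k)) - f(\mW^\star)\ge 0$, $d_k := \|\mW(k)-\mW(k-1)\|_F$, and $a := \tfrac{\min(\lambda,\mu)-L_c}{2}>0$, and restrict to $k\ge k_0$ as in the proof of Theorem~\ref{thm:sequence convergence}.

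First I would assemble the master recursion. The sufficient decrease inequality \eqref{eq:sufficient decrease} reads $r_{k-1}-r_k \ge a\,d_k^2$, while the subgradient bound \eqref{eq:all} gives $\dist(\zero,\partial f(\mW(k)))\le C_3\,d_k$ with $C_3 = 2L_c+\mu+\lambda$. Substituting the latter into the KL inequality \eqref{eqn:KL:f} yields $r_k^{\theta}\le C_2 C_3\,d_k$, i.e. $d_k \ge r_k^{\theta}/(C_2 C_3)$. Eliminating $d_k$ between this lower bound and the decrease estimate produces $r_{k-1}-r_k \ge \tfrac{a}{(C_2 C_3)^2}\,r_k^{2\theta}$, valid for all $k\ge k_0$. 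This one nonlinear difference inequality governs the entire rate.

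Next I would split on the exponent. When $\theta\in(0,\tfrac12]$ we have $2\theta\le 1$, so for $k$ large enough that $r_k\le 1$ one has $r_k^{2\theta}\ge r_k$, and the recursion collapses to $r_{k-1}\ge(1+c)\,r_k$ with $c=\tfrac{a}{(C_2C_3)^2}$; this forces geometric decay $r_k = O(\rho^k)$ with $\rho=\tfrac{1}{1+c}<1$, i.e. a linear rate. When $\theta\in(\tfrac12,1)$ the recursion $r_{k-1}-r_k\ge c\,r_k^{2\theta}$ with $2\theta>1$ is the classical sublinear one: applying the standard sequence lemma (examine the increments of $r_k^{1-2\theta}$, equivalently dominate $r_k$ by the solution of the comparison ODE $\dot y = -c\,y^{2\theta}$) gives $r_k = O(k^{-1/(2\theta-1)})$. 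In every case $r_k\to 0$ at least sublinearly. Finally I would transfer this to the iterates: by the tail-sum estimate \eqref{eq:difference summable}, $\|\mW(k)-\mW^\star\|_F \le \sum_{i=k}^\infty d_{i+1} \le d_k + \beta\,r_k^{1-\theta}$, and since $d_k \le \sqrt{(r_{k-1}-r_k)/a}$ is controlled by the same decaying gap, the iterate error inherits the rate — linear for $\theta\le\tfrac12$ and $O(k^{-(1-\theta)/(2\theta-1)})$ for $\theta\in(\tfrac12,1)$, hence at least sublinear in all cases.

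The main obstacle is the polynomial regime $\theta\in(\tfrac12,1)$: converting $r_{k-1}-r_k\ge c\,r_k^{2\theta}$ into an explicit decay rate is not a one-line telescoping and requires the careful sequence/ODE-comparison argument. A secondary point of care is verifying that the tail-sum bound \eqref{eq:difference summable} is dominated by its $r_k^{1-\theta}$ term (so that $d_k$ does not degrade the estimate), which is what pins down the iterate exponent $(1-\theta)/(2\theta-1)$ rather than the function-value exponent $1/(2\theta-1)$.
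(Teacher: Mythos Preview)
Your argument is correct and arrives at the same rates as the paper, but the route is genuinely different. You derive a scalar recursion on the function-value gap $r_k := f(\mW(k))-f(\mW^\star)$ by combining sufficient decrease with the KL/subgradient bound to get $r_{k-1}-r_k \ge c\,r_k^{2\theta}$, do the $\theta$ case split there, and only afterwards transfer to the iterates via \eqref{eq:difference summable}. The paper instead works directly on the tail-sum $P_k := \sum_{i\ge k}\|\mW(i+1)-\mW(i)\|_F$: it bounds the $[f(\mW(k))-f(\mW^\star)]^{1-\theta}$ term in \eqref{eq:difference summable} back in terms of $d_k = P_{k-1}-P_k$ (via \eqref{eqn:bound:beta:f}) to obtain a recursion $P_k \le (P_{k-1}-P_k) + \alpha (P_{k-1}-P_k)^{(1-\theta)/\theta}$, and then does the case split on that. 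Your approach is the Attouch--Bolte ``function-value first'' variant and is arguably cleaner for reading off the $r_k$ rate, but it requires the extra transfer step and a check that the $d_k$ term does not degrade the iterate exponent (you correctly note this); the paper's $P_k$ recursion gives the iterate rate directly without a separate transfer. Two minor gaps relative to the paper: you omit the $\theta=0$ case (finite-step termination), and in the linear regime your transfer yields $\|\mW(k)-\mW^\star\|_F = O(\tilde\rho^{\,k})$ with a possibly different base $\tilde\rho$ than the paper's $\rho = (1+\alpha)/(2+\alpha)$, though of course either establishes linear convergence.
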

 Towards that end,   we first know from the above argument that $\{\mW(k)\}$ converges to some point $\mW^\star$, i.e., $\lim_{k\to\infty}\mW(k)=\mW^\star.$ Then using \Cref{eq:difference summable} and the triangle inequality, we obtain
 \begin{equation}\label{eq:convergence rate 1}
\begin{aligned}
\|\mW(\widetilde k) - \mW^\star\|_F &\leq\sum_{k=\widetilde k}^{\infty}  \|\mW(k) - \mW(k+1)\|_F  
\\
&\leq   \|\mW(\widetilde k) - \mW(\widetilde k-1)\|_F + \beta  [f(\mW(\widetilde k)) - f(\mW^\star)]^{1-\theta}
\end{aligned}
\end{equation}
which indicates the convergence rate of $\mW(\widetilde k) \rightarrow  \mW^\star$  is at least as fast as the rate that $ \|\mW(\widetilde k) - \mW(\widetilde k-1)\|_F + \beta  [f(\mW(\widetilde k)) - f(\mW^\star)]^{1-\theta}$ converges to 0.  
In particular, the second term $\beta  [f(\mW(\widetilde k)) - f(\mW^\star)]^{1-\theta}$ can be controlled:
\begin{equation}\begin{aligned}
\beta [f(\mW(\widetilde k)) - f(\mW^\star)]^{\theta}   
&\leq \beta C_2 \dist(\zero, \partial f(\mW(\widetilde k))) 
\\
&\leq \underbrace{\beta C_2  (2B_0+\lambda+\|\mX\|_F)}_{:=\alpha}\|\mW(\widetilde k)-\mW(\widetilde k-1)\|_F 
\label{eqn:bound:beta:f}
\end{aligned}
\end{equation}

Plugging \eqref{eqn:bound:beta:f} back to \eqref{eq:convergence rate 1}, we then have
\[
\sum_{k=\widetilde k}^{\infty}  \|\mW(k) - \mW(k+1)\|_F \leq \|\mW(\widetilde k) - \mW(\widetilde k-1)\|_F +\alpha \|\mW(\widetilde k) - \mW(\widetilde k-1)\|_F^{\frac{1-\theta}{\theta}}. 
\]

We divide the following analysis into two cases based on the value of the KL exponent $\theta$. 

\begin{itemize}
		\item \emph{Case I}: If $\theta = 0$, we set $\textit{Q}:=\{k\in\mathbb{N} : \mW(k+1) \neq \mW(k)\}$ and take $k$ in $\textit{Q}$. When $k$ is sufficiently large, then we have:
	\begin{equation}
	\|\mW(k+1)-\mW(k)\|_F^2 := C_4 > 0 
	\label{eqn:gap}
	\end{equation}
	On the other hand, 
	\begin{equation}
	\begin{aligned}
		f(\mW(k+1)) - f(\mW(k))&\geq \frac{\min(\lambda,\mu) - L_c}{2}\| \mW(k+1)-\mW(k)\|_F^2\\
		&=\frac{\min(\lambda,\mu) - L_c}{2} C_4
		\label{eq:theta0}
	\end{aligned}
	\end{equation}
	Since $f(\mW(k))$ is known to be converged to $0$, Eq.~(\ref{eq:theta0}) implies that $\textit{Q}$ is finite and sequence $\mW(k)$ converges in a finite number of steps.
	\item \emph{Case II}: $\theta\in (0,\frac{1}{2}]$. This case means $\frac{1-\theta}{\theta} \geq 1$. We define $P_{\widetilde k} = \sum_{i = \widetilde k}^\infty \|\mW_{i+1} - \mW_i\|_F$,
	\begin{equation}
	P_{\widetilde k} \leq P_{{\widetilde k}-1}- P_{\widetilde k} + \alpha \left[P_{\widetilde k-1}- P_{\widetilde k}\right]^{\frac{1-\theta}{\theta}}. \label{eq:convergence rate 2}
	\end{equation}
	Since $P_{\widetilde k-1}- P_{\widetilde k} \rightarrow 0$, there exists a positive integer $k_1$ such that $P_{{\widetilde k}-1}- P_{\widetilde k} < 1,~\forall~\widetilde k\geq k_1$.    Thus,
	\[
	P_{\widetilde k} \leq \left(1+ \alpha\right) (P_{\widetilde k-1}- P_{\widetilde k}),~~~~\forall ~ \widetilde k\geq \max\{ k_0,k_1\},
	\]
	which implies that
	\begin{equation}
	P_{\widetilde k} \leq \rho \cdot P_{\widetilde k-1},~~~~\forall ~ \widetilde k\geq \max\{ k_0,k_1\},
	\label{eq:Pk decay}
\end{equation}
	where $\rho = \frac{1+\alpha}{2+ \alpha} \in (0,1)$. This together with  \eqref{eq:convergence rate 1} gives the linear convergence rate
	\begin{equation}
\label{eq:linear convergence}
	\|\mW(k) - \mW^\star\|_F  \leq \calO( \rho^{k-\overline k} ), \ \forall \ k\geq \overline k. 
	\end{equation}
	where $\overline k = \max\{ k_0,k_1\}$.
	
	\item 
	\emph{Case III}: $\theta\in (1/2,1)$. This case means $\frac{1-\theta}{\theta} \leq 1$. Based on the former results, we have 
	\[
	P_{\widetilde k} \leq \left(1+ \alpha\right) \left[P_{\widetilde k-1}- P_{\widetilde k}\right]^{\frac{1-\theta}{\theta}},~~~~\forall ~ \widetilde k\geq \max\{ k_0,k_1\}.
	\]
	We now run into the same situation as in \cite{attouch2009convergence}. Hence  following  a similar argument  gives 
	\[
	P_{\widetilde k}^{\frac{1-2\theta}{1-\theta}} -  P_{\widetilde k-1}^{\frac{1-2\theta}{1-\theta}}  \geq \zeta, \ \forall \ k\geq \overline k  
	\]
	for some $\zeta >0$.  Then repeating and summing up the above inequality from $\overline k = \max\{ k_0,k_1\}$ to any $k> \overline k$, we can conclude 
	\[
	P_{\widetilde k}\leq \left[ P_{\widetilde k-1}^{\frac{1-2\theta}{1-\theta}} + \zeta(\widetilde k-\overline k)  \right]^{-\frac{1-\theta}{2\theta -1}} = \calO\left((\widetilde k-\overline k) ^{-\frac{1-\theta}{2\theta -1}} \right).
	\]
	Finally, the following sublinear convergence holds 
	\begin{equation} \label{eq:sublinear convergence}
	\|\mW(k) - \mW^\star\|_F  \leq \calO\left(( k-\overline k) ^{-\frac{1-\theta}{2\theta -1}} \right), \ \forall \  k> \overline k.
	\end{equation}
\end{itemize}

We end this proof by commenting that both linear and sublinear convergence rate are  closely related to  the KL exponent $\theta$ at the critical point $\mW^\star$.

\section{Experiments}
\label{exp}

In this section, we are going to apply our proposed spherical PCA to both synthetic data and real-world datasets to test the performance of our proposed method. The experiment on synthetic data will be introduced first followed by experiments on real-world datasets.
\subsection{Synthetic Data Experiment}

We first generate 200 data, half of which is distributed within the region between $X=Z$ and $Z$ axis (denoted as blue dots in the top part of Fig.~\ref{fig:toy}), while another group is generated within the region between $Y=Z$ and $Z$ axis (denoted as the red dots). These two clusters of data are generated through different angles. Thus when we do clustering, it should be angle distance rather than Euclidean distance to determine the clustering result. For our method, we learn a projection matrix $\mU\in \R^{3\times 2}$ and plot the component matrix $\mV\in \R^{2\times 200}$ as the bottom part illustrates. We see that, Euclidean distance-based method (such as $K$-means) will yield poor clustering result (middle part), while spherical-PCA will obtain good clustering result.
\begin{figure*}[h!]
	\centering
\begin{tabular}{ccc}	\includegraphics[height=4cm]{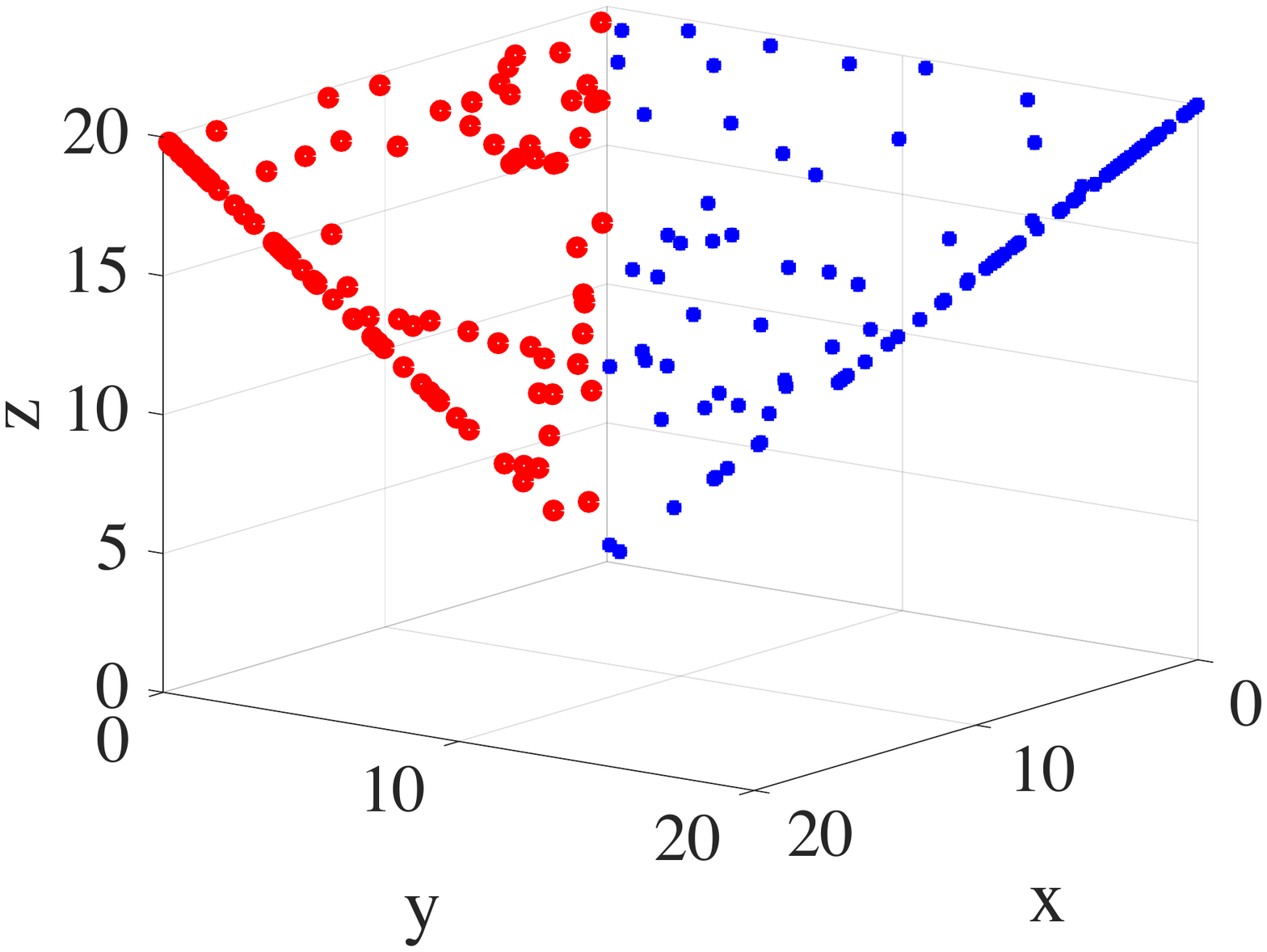}
&\includegraphics[height=4cm]{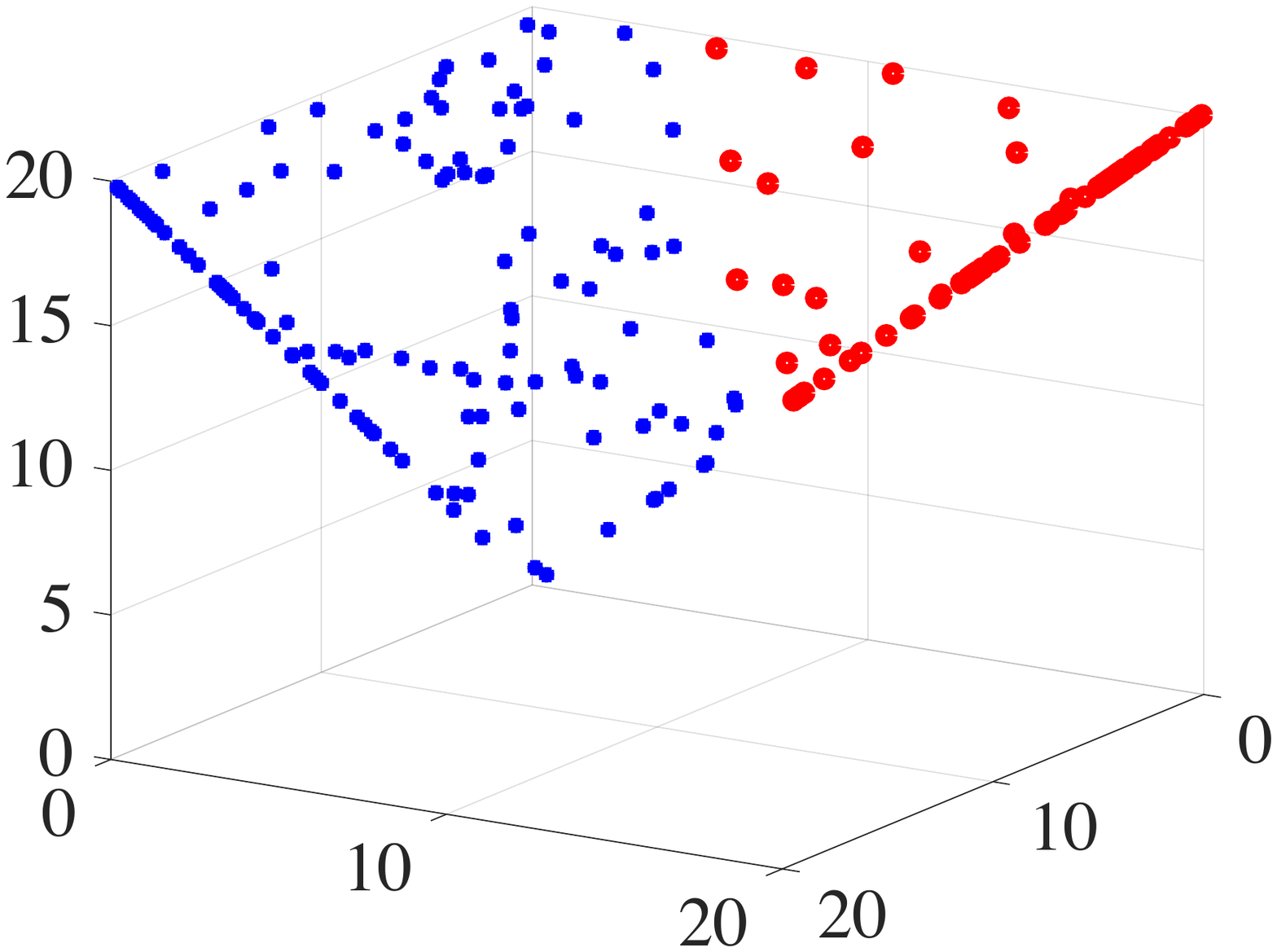}
&\includegraphics[height=4cm]{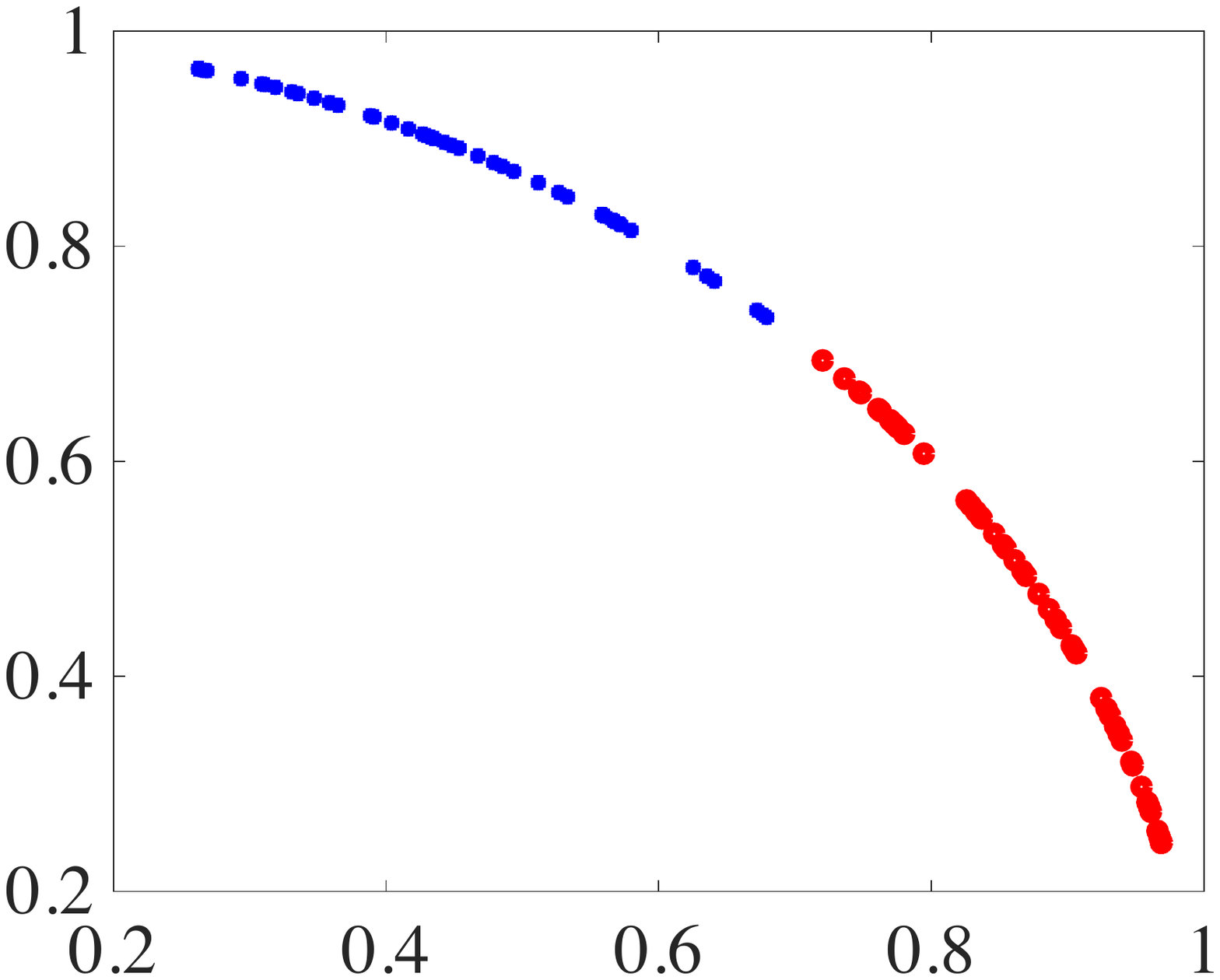}
\end{tabular}
	\caption{\textbf{Left}: two groups of data generated from two angles. \textbf{Middle}: clustering result with distance -based method $K$-means. \textbf{Right}: clustering result with our method. Blue and red represent different clusters.}
	\label{fig:toy}
\end{figure*}

Also, we show the convergence of $\{\mW(k)\}_{k\geq 0} =\{(\mU(k),\mV(k))\}_{k\geq 0}$ generated by our method. As Fig.~\ref{fig:change} shows, after short iterations, the generated sequences will be stable, which is in accordance with the convergence proof. It also illustrates the objective with update. We see that it converges fast with a sublinear rate, which validates our convergence rate analysis.  
\begin{figure*}[h!]
	\centering
	\begin{tabular}{ccc}			\includegraphics[height=4cm]{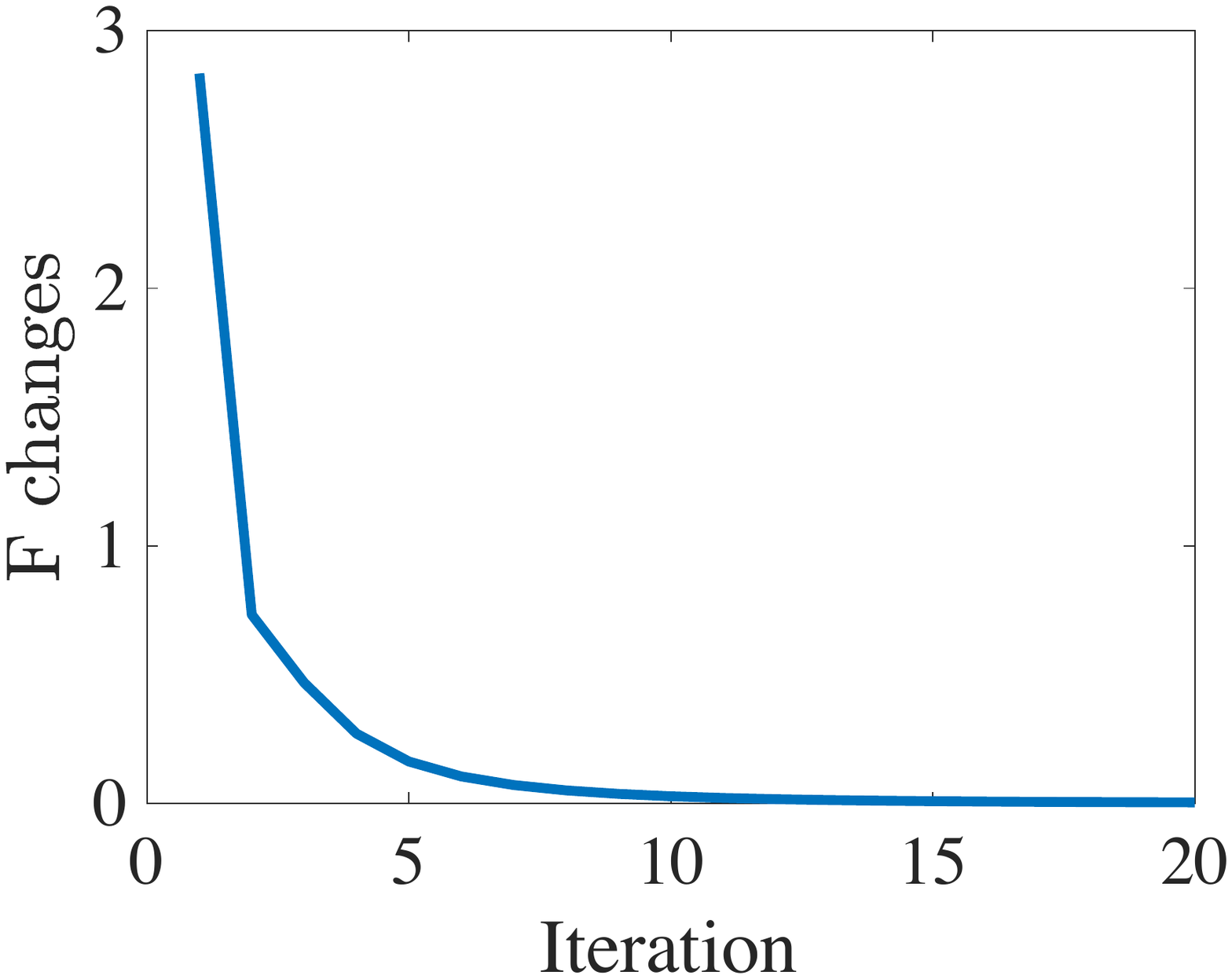}
&\includegraphics[height=4cm]{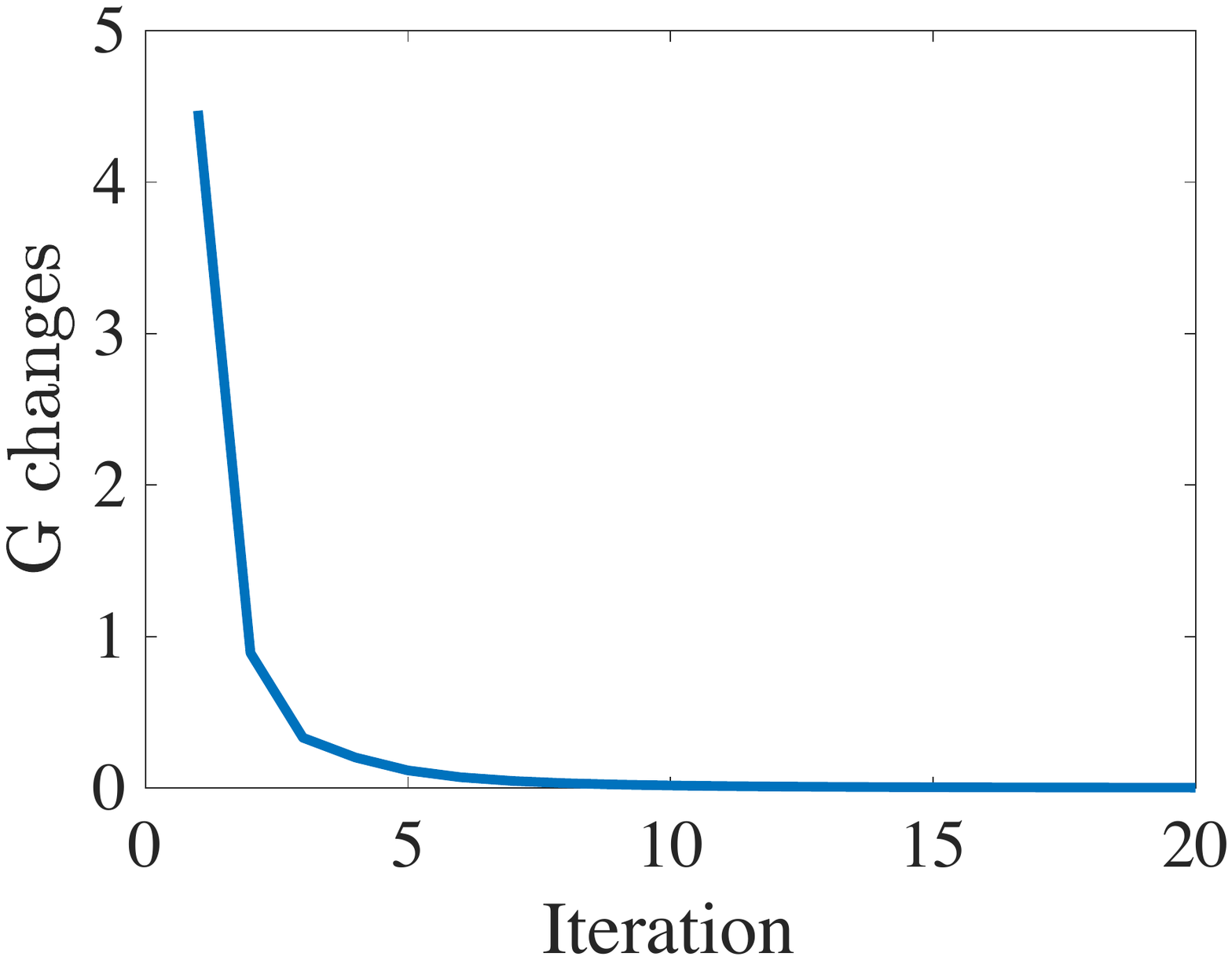}
&\includegraphics[height=4cm]{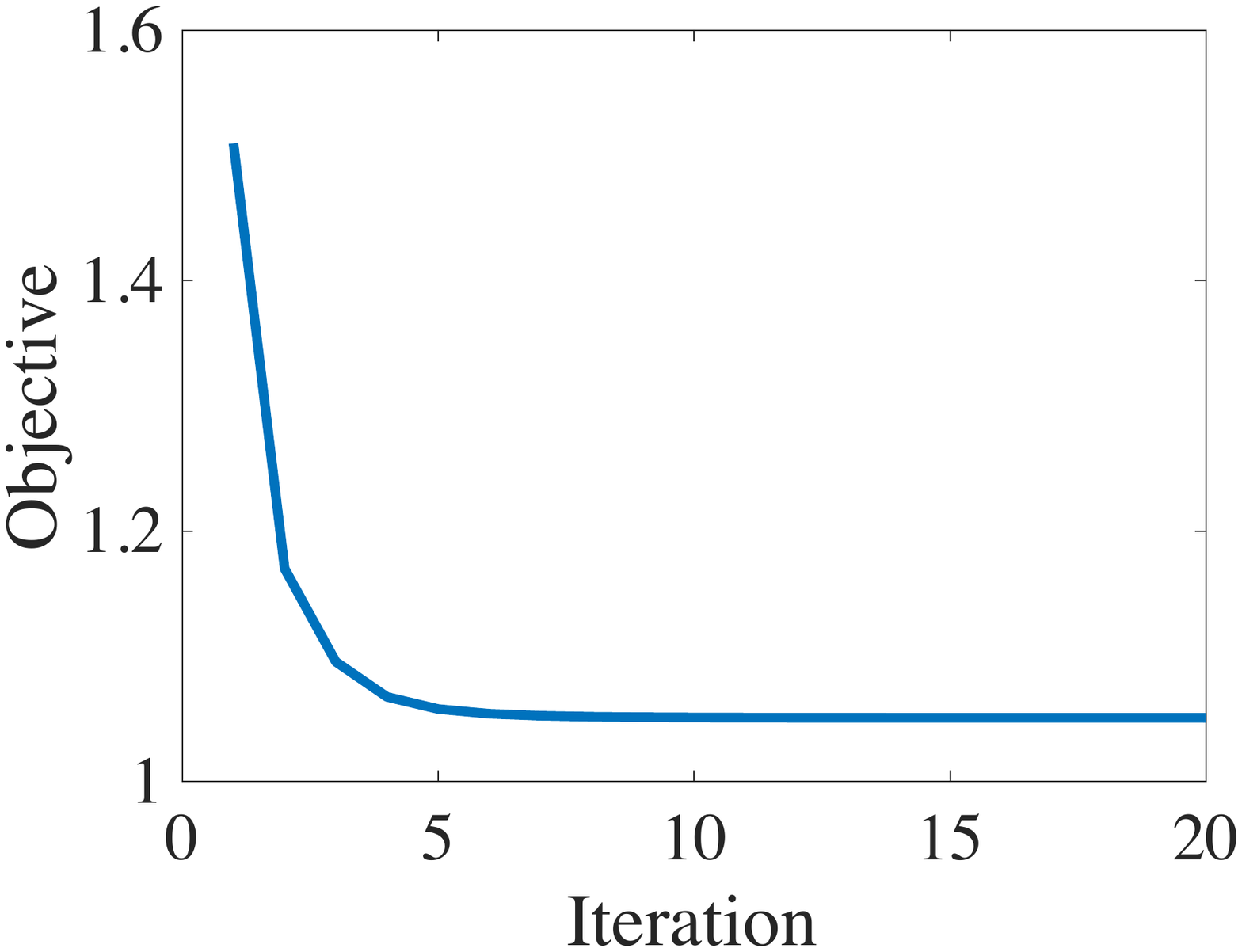}
\end{tabular}	
\caption{\textbf{Left}: $\|\mU(k+1)-\mU(k)\|_F$ with updates. \textbf{Center}: $\|\mV(k+1)-\mV(k)\|_F$ with updates. Both converge to $0$ after several iterations. \textbf{Right}: Objective converges at sub-linear rate. All validate our analysis.}
	\label{fig:change}
\end{figure*}

\subsection{Real-world Datasets Experiment}
\begin{table*}[h!]
	\begin{center}
		\caption{Clustering performance of different  algorithms on 20-newsgroup dataset}
		\label{tab:clustering_allDiff}
		\begin{tabular}{*{16}{c}}
			\toprule
			\multicolumn{1}{c}{Methods}  &
			\multicolumn{2}{c}{$K$-means}  &
			\multicolumn{2}{c}{MUA} &
			\multicolumn{2}{c}{PCA} &
			\multicolumn{2}{c}{R1-PCA} &
			\multicolumn{2}{c}{K-SVD} &
			\multicolumn{2}{c}{\textbf{Spherical PCA}}\\
			\cmidrule(r){1-1}\cmidrule(r){2-3}\cmidrule(r){4-5}\cmidrule(r){6-7}\cmidrule(r){8-9}
			\cmidrule(r){10-11}\cmidrule(r){12-13}
			{\#Groups}&{Acc} & {NMI} & {Acc} & {NMI} & {Acc} & {NMI} & {Acc} & {NMI} & {Acc} & {NMI} & 
			{Acc} & {NMI}  \\
			\midrule
			5&0.651&0.621&0.674&0.614& 0.703&0.628& 0.745&0.647 & 0.789&0.673 & \textbf{0.838}&\textbf{0.695} \\
			10&0.487&0.316& 0.478&0.320 & 0.502&0.383 & 0.535&0.398 & 0.527&0.394& \textbf{0.588}&\textbf{0.401}\\
			15&0.398&0.307& 0.387&0.301& 0.412&0.319& 0.423&0.320 & 0.461&0.377 & \textbf{0.486}&\textbf{0.385} \\
			20&0.315&0.242& 0.314&0.221& 0.362&0.248& 0.394&0.260 & 0.412&0.280 & \textbf{0.431}& \textbf{0.294}\\
			\bottomrule
		\end{tabular}
	\end{center}
\end{table*}
\begin{table*}[h!]
	\begin{center}
		\caption{Clustering performance of different  algorithms on four UCI datasets}
		\label{tab:clustering_allDif}
		\begin{tabular}{*{16}{c}}
			\toprule
			\multicolumn{1}{c}{Methods}  &
			\multicolumn{2}{c}{$K$-means}  &
			\multicolumn{2}{c}{MUA} &
			\multicolumn{2}{c}{PCA} &
			\multicolumn{2}{c}{R1-PCA} &
			\multicolumn{2}{c}{K-SVD} &
			\multicolumn{2}{c}{\textbf{Spherical PCA}}\\
			\cmidrule(r){1-1}\cmidrule(r){2-3}\cmidrule(r){4-5}\cmidrule(r){6-7}\cmidrule(r){8-9}
			\cmidrule(r){10-11}\cmidrule(r){12-13}
			{Data (\#class)}&{Acc} & {NMI} & {Acc} & {NMI} & {Acc} & {NMI} & {Acc} & {NMI} & {Acc} & {NMI} & 
			{Acc} & {NMI}  \\
			\midrule
			glass (6)&0.687&0.566&0.692&0.574& 0.732&0.608& 0.769&0.626 & \textbf{0.801}&\textbf{0.648} & 0.788&0.635 \\
			diabetes (2)&0.775&0.632& 0.788&0.654 & 0.761&0.613 & 0.808&0.631 & 0.827&0.672& \textbf{0.832}&\textbf{0.680}\\
			mfeat (10)&0.365&0.223& 0.358&0.211& 0.371&0.225& \textbf{0.431}&\textbf{0.342} & 0.412&0.328 & 0.425&0.330 \\
			isolet (26)&0.267&0.198& 0.253&0.181& 0.262&0.182& 0.324&0.201 & 0.357&0.246 & \textbf{0.373}&\textbf{0.250} \\
			\bottomrule
		\end{tabular}
	\end{center}
\end{table*}
It is known that in information retrieval, similarities or dissimilarities (proximities) between objects are more critical than Euclidean distance.
In this subsection, we will test our proposed method on
the widely-used 20-newsgroup dataset for clustering. We have different  newsgroups such as:
\textit{comp.graphics, rec.motorcycles, rec.sport.baseball,
sci.space, talk.politics.mideast}, etc.. 200 documents are randomly
sampled from each newsgroup.
The word-document matrix
X is constructed with 500 words selected according to the
mutual information between words and documents. \textit{Tf.idf}
term weighting is used before normalization. Clustering accuracy are computed
using the known class labels. Results will be compared including clustering accuracy (Acc.) and Normalized Mutual Information (NMI)~\cite{xu2003document}.

Different clustering algorithms will be compared including: 
\begin{enumerate}
	\item  \textbf{R1-PCA}, which proposes a rotational invariant $\ell_1$-norm PCA, where  a robust covariance matrix will
soften the effects of outliers~\cite{ding2006r};
\item \textbf{K-SVD}, which is an iterative method that alternates between sparse coding of the
examples based on the current dictionary and a process of updating
the dictionary atoms to better fit the data~\cite{aharon2006k};
\item \textbf{PCA}, i.e. the vanilla PCA method in Eq.~(\ref{eq:nmf_obj_pca2}) without the constraint on $\mV$, which will be Euclidean distance-based by default;
\item \textbf{NMF} Matrix Factorization proposed by~\cite{lee2001algorithms, liu2018high, liu2015robust, wang2011simultaneous} where $\mU$ and $\mV$ are obtained by Multiplicative Updating Algorithm with nonnegative constraint
\item \textbf{$K$-means} \cite{jain2010data}.
\end{enumerate}

We vary the number of clusters from $5$ to $10, 15$ and $20$. In each newsgroup, $200$ documents are randomly sampled, and we repeat for $10$ times by taking the average and report the clustering result as Table~\ref{tab:clustering_allDiff} demonstrates.

We see that our proposed method Spherical PCA can always achieve both higher clustering accuracy and normalized mutual information in text analysis.

We also compare our method with other methods on UCI datasets including: \textit{glass, diabetes, mfeat} and \textit{isolet}. Table~\ref{tab:clustering_allDif} illustrates the results. We see that though our method doesn't show the absolute advantage as on text, still the result is considerably good.

All the experiments indicate that our method can achieve good performance on both text and non-text datasets, showing its potential for broader application.

\section{Conclusion}
In this paper, we study spherical PCA where the direction matrix is orthonormal and the component vectors are assumed to lie in the unitary sphere. The benefit is obvious that it can make the angle distance equivalent to Euclidean distance.  Due to the nonconvexity of  objective function and constraints on the factors which are difficult to tackle, we propose an alternating linearized minimization method to derive the solution, which is proved to be sequence convergent. Moreover, we analyze the convergence rate which is validated by our experiments. The results on real-world datasets and synthetic data illustrate the superiority of our method.

\bibliography{Convergence}
\bibliographystyle{plain}
\end{document}